\newcommand{\E}{\mathbb{E}}
\newcommand{\N}{\mathbb{N}}
\newcommand{\C}{\mathbb{C}}
\renewcommand{\P}{\mathbb{P}}
\newcommand{\R}{\mathbb{R}}
\newcommand{\cA}{\mathcal{A}}
\newcommand{\cF}{\mathcal{F}}
\newcommand{\cN}{\mathcal{N}}
\newcommand{\cO}{\mathcal{O}}
\newcommand{\cR}{\mathcal{R}}
\newcommand{\cS}{\mathcal{S}}
\newcommand{\cU}{\mathcal{U}}
\newcommand{\Ew}[1]{\mathbb{E}\left[#1\right]}
\renewcommand{\Pr}[1]{\mathbb P\left(#1\right)}
\newcommand{\norm}[1]{\lVert #1 \lVert}
\newtheorem{theorem}{Theorem}
\newtheorem{lemma}{Lemma}
\newtheorem{proposition}{Proposition}
\newtheorem{definition}{Definition}
\newtheorem{corollary}{Corollary}
\newtheorem{remark}{Remark}
\begin{document}

\title{3DPG: Distributed Deep Deterministic Policy Gradient Algorithms for Networked Multi-Agent Systems}

\author{Adrian Redder,  Arunselvan Ramaswamy, and Holger Karl
\thanks{A. Redder was supported by the German Research Foundation (DFG) - 315248657 and SFB 901. }
\thanks{A. Redder is with the Computer Science Department, Paderborn University (e-mail: aredder@mail.upb.de).}
\thanks{A. Ramaswamy is with the Computer Science Department, Karlstad University (e-mail: arunselvan.ramaswamy@kau.se).}
\thanks{H. Karl is with the Hasso-Plattner-Institute, University of Potsdam (e-mail: holger.karl@hpi.de).}
}


\maketitle

\begin{abstract}

 
We present Distributed Deep Deterministic Policy Gradient (3DPG), a multi-agent actor-critic (MAAC) algorithm for Markov games. Unlike previous MAAC algorithms, 3DPG is fully distributed during both training and deployment. 3DPG agents calculate local policy gradients based on the most recently available local data (states, actions) and local policies of other agents. During training, this information is exchanged using a potentially lossy and delaying communication network. The network therefore induces Age of Information (AoI) for data and policies. We prove the asymptotic convergence of 3DPG even in the presence of potentially unbounded Age of Information (AoI). This provides an important step towards practical online and distributed multi-agent learning since 3DPG does not assume information to be available deterministically. We analyze 3DPG in the presence of policy and data transfer under mild practical assumptions. Our analysis shows that 3DPG agents converge to a local Nash equilibrium of Markov games in terms of utility functions expressed as the expected value of the agents local approximate action-value functions (Q-functions). The expectations of the local Q-functions are with respect to limiting distributions over the global state-action space shaped by the agents' accumulated local experiences. Our results also shed light on the policies obtained by general MAAC algorithms. We show through a heuristic argument and numerical experiments that 3DPG improves convergence over previous MAAC algorithms that use old actions instead of old policies during training. Further, we show that 3DPG is robust to AoI; it learns competitive policies even with large AoI and low data availability.

\end{abstract}


\begin{IEEEkeywords}
Actor-Critic Algorithms, Age of Information, Asymptotic Convergence, Deep Multi-Agent Reinforcement Learning, Distributed Online Learning, Networked Systems
\end{IEEEkeywords}

\section{Introduction}
\label{sec:introduction}

\IEEEPARstart{M}{ulti}-Agent Actor-Critic (MAAC) algorithms are an important and popular class of Deep Reinforcement Learning (DeepRL) algorithms for intelligent decision making in multi-agent systems (MAS) \cite{arulkumaran2017deep}. 
MAAC algorithms, like the popular Multi-Agent Deep Deterministic Policy Gradient (MADDPG) algorithm \cite{lowe2017multi}, typically assume \emph{instant access to global data} in order to train coordinated decentralized policies: to train MADDPG, agents need instant access to all agent polices, their action sequences, and global state information. This training paradigm is called \emph{centralized training with decentralized execution}.

This central perspective can be justified only when the training involves an accurate simulator of the environment or when the agents are connected by communication network without transmission errors and delays. These assumption are often impractical. For example, in edge computational task offloading \cite{sofla2022towards}, observations and decisions are inherently local and not available globally to all edge nodes. Algorithms such as MADDPG are therefore \emph{unsuitable for such truly distributed online multi-agent learning problems}. Other works, such as \cite{zhang2018fully}, propose an algorithm for fully decentralized multi-agent learning. Here, it is assumed that the state and action spaces of the agents are finite, that the global state space can be observed by all agents, and that the resulting Markov chain is ergodic for each potential policy.  These assumptions will be hard to verify in practice. Instead, we propose an algorithm that only uses local information and we show explicitly that the limiting policies found by our algorithm result in an ergodic Markov chain.

The assumptions in the literature regarding centralized training and global data access stem from the need for coordinated learning among multiple agents. To provide a distributed online learning algorithm based on MADDPG, all local states, actions, and policies
must be made available to all agents in a synchronized and lossless manner, across an imperfect network. That is, in most cases, prohibitively expensive. Our work addresses this issue and presents the Distributed Deep Deterministic Policy Gradient (3DPG) algorithm, a natural distributed extension of the single agent DDPG algorithm \cite{lillicrap2015continuous} that enables coordinated but fully distributed online multi-agent learning. 3DPG learns distributed but coordinated policies by using global data during training that has been collected by agents using communication over an available network. Notably, the agents only use local data to take decisions. In other words, individual actions are not globally coordinated, but the agents joint average behavior is coordinated.

3DPG only requires that the agents can exchange information (local states, actions and policies) in an imperfect manner: it might be lost,
or experience significant delay, causing data to have random age once it arrives -- this is commonly  described by so-called Age of Information (AoI) random variables \cite{redder22allerton}.
\emph{We analyze 3DPG under practical sufficient conditions, in particular very weak communication assumptions for the MAS. Our modest communication assumptions even allow for potentially unbounded asymptotic growth of the AoI and make no deterministic requirements regarding data availability.}

To guarantee convergence of 3DPG, we address another problem. Despite their popularity and usefulness in many practical scenarios, the conditions under which AC and MAAC algorithms converge are not well studied -- we address this gap and present practically verifiable and sufficient conditions for DDPG, 3DPG and MADDPG to converge asymptotically. Our result is based on recent progress in understanding the convergence behavior of Deep Q-Learning \cite{ramaswamy2021deep}. Such convergence guarantees and analyses are in general difficult, even for traditional \emph{single-agent} DeepRL algorithms. In the single-agent case,  RL algorithms with \emph{linear} function approximations are well studied, but algorithms that use \emph{non-linear} function approximators like Deep Neural Networks (DNNs) are not well understood. At best, convergence is only characterized under strict assumptions that are difficult to verify in practice, e.g., \cite{wang2019neural} assumes that the state transition kernel of the Markov decision process is regular; this questions the practical usefulness of such algorithms. The behavior of \emph{multi-agent} DeepRL algorithms is even more challenging since the various agents' training processes are intertwined. It is thus pertinent, both from a theoretical and from a practical standpoint, to analyze, under practical assumptions, the asymptotic properties of multi-agent DeepRL algorithms. 

\subsection{The 3DPG Algorithm}

To get a first idea (Section~\ref{sec: algo} provides details), let us view 3DPG as a multi-agent version of DDPG \cite{lillicrap2015continuous}, the most popular AC DeepRL algorithm. DDPG involves two DNNs, an actor (policy) network and a critic (Q-function) network. The actor network is trained to approximate the optimal policy, and the critic network is trained to approximate an objective function. More specifically, the critic network is trained to minimize a variant of the squared Bellman error, while the actor network is trained to pick actions that maximize the approximation of the optimal Q-function, as found by the critic. Notably, both the critic and actor networks are trained simultaneously.

In 3DPG, each agent only has access to a locally observable state (a part of the global state), can exchange information with other agents in an imperfect manner, and takes actions that affect both its local state and states observable by other agents. To take actions, each agent uses a local actor/policy. To train its local policy, each agent uses a local critic/Q-function approximation.
The local policies are functions of the local agent states, which in turn constitute the global (multi-agent) system state. 
At every discrete time step, after all agents take actions, they obtain local rewards. Each agent then uses a local copy of DDPG (as explained above) to train a local actor, while simultaneously training a local critic with respect to (w.r.t.) the global decision making problem associated with its local reward structure. 

To perform the actor training step, the agents use local \emph{policies} from other agents transferred via a potentially imperfect communication network that, e.g., causes delays. At each agent, the training step may be viewed as calculating a local policy gradient using the best available approximation of the current global policy. The 3DPG architecture at agent~$1$ of a $D$ agent system is illustrated in \Cref{fig:3DPG_flow}. In addition to the old policies of other agents, all local actor and critic training steps use data (states, actions) of other agents transferred via the communication network. This gives a quasi-centralized view but based on information with potentially large age; this view facilitates online training and execution of 3DPG. 
\begin{figure}[t]
\centering
\includegraphics[width=.485 \textwidth]{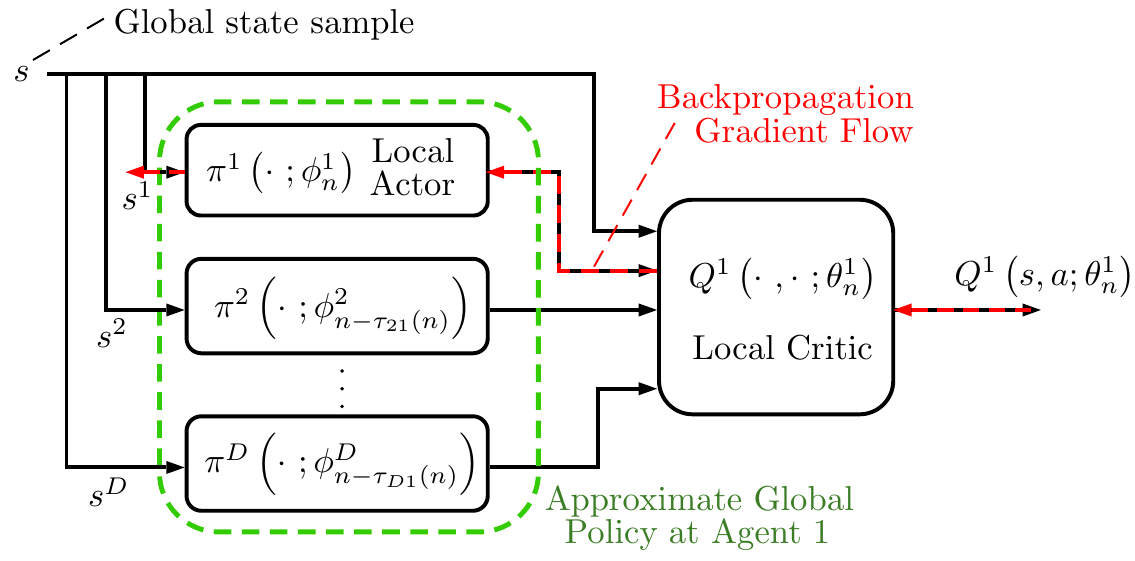}
\caption{Illustration of the 3DPG architecture at agent~$1$. The local critic is evaluated for action $ a= \pi(s;\phi_{\tau^1(n)})$ of the local approximation of the global policy. See \Cref{sec: algo} for details.}
\label{fig:3DPG_flow}
\end{figure}

The aforementioned actor training step is in stark contrast to the corresponding one in MADDPG from \cite{lowe2017multi}. Here the actor training step is based on past \emph{actions} of other agents. Since these actions are sampled from old transitions, they sometimes include random actions due to exploration. Theses actions, however, do not represent actual agent behavior and can thus negatively impact the policies found by the algorithm (see \Cref{subsec:MAACgradients,sec:comparison} for details). Using old policies of other agents, as in 3DPG, may initially increase the training variance. However, we show that for 3DPG the effect of unrepresentative behavior from old policies as well as the effect of potentially random actions does not effect the policies found by 3DPG. For MADDPG, we show that the presence of randomly explored actions can have a negative impact on the training result of MADDPG. This result is supported by numerical examples that show that 3DPG obtains better policies compared to MADDPG for problems that require coordinated decision making. 


3DPG converges even when the AoI associated with the used information of other agents during training merely has finite mean. In other words, the AoI may vary with infinite variance and may grow unbounded. The convergence in the presence of such potentially significantly outdated information (the local states, actions and policies from other agents) is achieved by agents using diminishing step-size sequences, ensuring that after some time the change of their local policies does not grow significantly larger than the step size.
\\ 
The asymptotic properties of 3DPG are the following:
\begin{enumerate}
    \item All local critics converge to a set of DNN weights such that the local Bellman-error gradients are zero on average w.r.t. to limiting distributions over the global state-action space.
    \item All local actors converge to a set of DNN weights such that the local policy gradients 
    are also zero w.r.t. the same limiting distributions.
\end{enumerate}
The aforementioned limiting distributions are shaped the agents' accumulated local experiences. More specifically, the global data tuples available to agents through communication, which are then used in the training steps for critics and actors, form local limiting distributions such that all critics and actors converge jointly in expectation to stationary points with respect to these limiting distributions over the global state action space. 
In that regard, no agent can improve its performance locally w.r.t. its limiting distribution of the training process and the limiting policies of the other agents. Specifically, we show that 3DPG agents converge to a local Nash equilibrium of Markov games. Notably, this is achieved although every agent may have a different local reward structure, i.e. irrespective of whether they cooperate or compete.

\subsection{Main contributions and Paper Summary}
\begin{itemize}
    \item We present 3DPG, an online, fully distributed MAAC learning algorithm for networked MAS with continuous decision spaces.
    \item We present an asymptotic convergence analysis for 3DPG under practical conditions. These include novel communication assumptions that formalize how old information used by 3DPG agents are allowed to be.
    \item We show that 3DPG converges to a local Nash equilibrium of a Markov Game.
    \item Our convergence analysis, provides the first characterization of the policies found by Deep AC and Deep MAAC algorithms under assumptions that represent how these DeepRL algorithms are used in practice.
    \item Our analysis reveals the difference between using \emph{policies} of other agents (3DPG) compared to using \emph{actions} of other agents (MADDPG) in MAAC algorithms: in the presence of exploration MAAC algorithms that learn from actions take into account that other agents may act randomly even though they do not actually do so; 3DPG does not have this negative property.
\end{itemize}

To study the convergence of 3DPG, we use recent asymptotic analyses of Deep Q-Learning under practical and mild assumptions \cite{ramaswamy2021deep}. Since DDPG may be as a viewed special case of 3DPG involving a single agent, our analysis can be directly used to understand DDPG. 




\textbf{Paper organization.} In \Cref{sec:backgr-mark-games},  we define the Markov game setting considered in this work. In \Cref{sec: algo}, we describe the 3DPG Algorithm and the required assumptions to prove its convergence. \Cref{sec:analysis,sec: extensions} present our main convergence analysis. Afterwards, we analyze the difference between the 3DPG and MADDPG algorithm in \Cref{sec:comparison}, which we support with numerical experiments \cref{sec: numerical}. We close with conclusions and future work in \Cref{sec:discussions}. 

\section{Markov Games}
\label{sec:backgr-mark-games}

In multi-agent systems, the global state of the environment is typically the concatenation of the agents' local states. However, the global state is usually unobservable by any agent. The global state transitions to a new state after each agent has taken its local action. 
After the global state transition, the agents receive \emph{local feedback/reward signals}. The structure of the local reward signals depend on the nature of interaction between the agents: do they cooperate or compete?

In the following, we always use superscripts $i$ for the agent index in the multi-agent system, and we use subscripts $n$ as discrete time steps.
We assume that the multi-agent system under consideration can be modeled as a $D$-agent Markov game \cite{littman1994markov}, which is formally defined as the 4-tuple $\left(\cS, \cA, p, \{r^i \mid 1\le i \le D\}\right),$
where:
\begin{itemize}
	\item[$\cS$] $ = \prod_{i=1}^D \cS^i$ is the global state space, with $\cS^i \coloneqq \R^{k^i}$ the local state space of agent~$i$ with $k^i>0$.
	\item[$\cA$]  $= \prod_{i=1}^{D}\cA^i$ is the global action space, where an action $a = (a^1, \ldots, a^D) \in \cA$ denotes the joint action as a concatenation of local actions $a^i \in \cA^i \subseteq \R^{d^i}, d^i > 0$.
	\item[$p$] is the Markov transition kernel, i.e. $p(\cdot \mid s, a)$ is the distribution of the successor state of state $s$ after action $a$ is executed. 
	\item[$r^i$] is the local scalar reward function associated with agent $i$. Specifically, $r^i(s,a)$ is the local reward that agent $i$ observes when the system is in state $s$ and the global action $a$ is taken. 
\end{itemize}
In many cooperative Markov games, the local reward functions coincide, i.e., $r^i \equiv r$ for $1 \le i \le D$. Such models are called factored decentralized MDPs \cite{bernstein2002complexity}. 

Consider a $D$-agent Markov game as defined in above. The $D$ agents interact with the environment at discrete time steps $n \in \N$. At every time step $n$, agent~$i$ observes a local state $s^i_n \in \cS^i$, based upon which it must take a local (continuous) action $a^i_n \in \cA^i \subseteq \R^{d^i} $, for which it receives a reward $r_n^i$. 

Suppose that the local behavior of agent~$i$ is defined by a local DNN policy $\pi^{i}(s^i;\phi^i)$, parameterized by a vector $\phi^i$. Define the associated global policy as
\begin{equation}
\label{eq:global_policy}
    \pi \coloneqq (\pi^{1}, \ldots, \pi^{D}).
\end{equation} 
For each local reward function $r^i$, the return starting from time step $1$ is defined by $R^i \coloneqq \sum_{n=1}^{\infty} \gamma^{n-1} r^i(s_n,a_n)$ with discount factor $0 < \gamma < 1$. Given a global policy $\pi$, the associated action-value function $Q^i$  of agent~$i$ is given by $Q^i(s,a) \coloneqq \E_{\pi} \left[R^i \mid s_1 = s, a_1 = a \right]$. For each local reward function $r^i$, the associated optimal policy is characterized by the optimal action-value function $Q^i_*(s,a)$, which is defined as a solution to Bellman's equation \cite{bertsekas1996neuro}:
\begin{equation}
\label{eq:Bellmans_equation}
    Q^i_*(s,a) = \E_{s',r^i} \left[r^i(s,a) + \gamma \max_{a'\in \cA} Q^i_*(s',a') \mid s,a \right]
\end{equation}
The problem is to find local policy parametrizations $\phi^i_*$ for each local policy $\pi^{i}(s^i;\phi^i)$, such that for every agent~$i$:
\begin{equation}
    \label{eq:local_objective}
    \pi^i(s^i; \phi^i_*) \approx \underset{a^i \in \cA}{\text{argmax}} \ Q^i_*(s, a)\big\vert_{\pi^{j\not=i}} \quad \forall s\in \cS.
\end{equation}
In other words, all local policies should act optimally conditioned on the local policies obtained by all other agents. 

\section{Distributed Deep Deterministic Policy Gradient Algorithm (3DPG)}
\label{sec: algo}
In this section, we define the 3DPG algorithm.

\subsection{Multi-Agent Actor-Critic Gradients}
\label{subsec:MAACgradients}

Suppose each agent~$i$ uses a local DNN approximator $ Q^i(s,a ; \theta^i)$ for its local critic;  $\theta^i$  represents the associated vector of network weights. The local critic is trained using the Deep Q-Learning algorithm \cite{mnih2015human} to find $\theta^i_*$ such that $ Q^i(s,a ; \theta_*^i) \approx Q^i_*(s,a)$ for all state-action pairs $(s,a)$. As mentioned before, each local actor/policy $\pi^{i}(s^i; \phi^i)$ is parameterized by $\phi^i$. The goal is to train the local critics and local actors jointly such that \eqref{eq:local_objective} holds. This is challenging due to the potential conflicting rewards of the agents.

Consider that at some time step $n$, the local critic and actor parametrizations are $\theta_n^i$ and $\phi_n^i$. Further, suppose agent~$i$ gets access to a global data tuple 
\begin{equation}
\label{eq:global_data_tuple}
    t^i_m \coloneqq (s_m, a_m, r^i(s_m,a_m), s_{m+1})
\end{equation}
from some transition from time $m$ to $m+1$ with $m \ll n$. The availability of at least some global tuples must be ensured by coordinated communication between the agents. This will be discussed further in the following subsections. Now, if agent~$i$ had access to the parametrizations of the other agents, then it could calculate a local critic gradient
\begin{equation}
\begin{split}
\label{eq:central_critic_grad}
    &\nabla_{\theta^i} l^i(\theta^i_n, \phi_{n}, t^i_m) \coloneqq \nabla_{\theta^i} Q^i(s_m,a_m; \theta_n^i)\Big(r^i(s_m,a_m)  + \\& 
    \gamma Q^i(s_{m+1}, \pi(s_{m+1}; \phi_n); \theta_n^i)- Q^i(s_m,a_m; \theta^i_n)\Big).
\end{split}
\end{equation}
\Cref{eq:central_critic_grad} is a sample gradient of the local squared Bellman error of $Q^i$ for the observed global tuple $t_m$, which follows from the associated error in Bellmans equation \eqref{eq:Bellmans_equation}.

Now, there two possible ways to formulate a ``natural'' distributed version of the policy gradient in the DDPG algorithm \cite{lillicrap2015continuous}. 
The first one is the local policy gradient
\begin{equation}
\label{eq: lowe_grad_0}
    \begin{split}
        &\nabla_{\phi^i} g_{\text{MADDPG}}^i(\theta^i_n, \phi^i_{n}, s_m, a^{j\not=i}_m) \coloneqq  \nabla_{\phi^i} \pi^i(s_m^i; \phi^i_n) \\ &\qquad \nabla_{a^i} Q^i (s_m,a^1_m, \ldots, \pi^i(s^i_m; \phi^i_n), \ldots, a^D_m; \theta^i_n ),
    \end{split}
\end{equation}
which is used in the MADDPG algorithm of \cite{lowe2017multi}.
The second one is the local policy gradient
\begin{equation}
    \label{eq:central_actor_grad}
    \begin{split}
        &\nabla_{\phi^i} g_{\text{3DPG}}^i(\theta^i_n, \phi_{n}, s_m) \coloneqq \nabla_{\phi^i} \pi^i(s_m^i; \phi^i_n) \\ &\qquad \nabla_{a^i} Q^i (s_m,\pi(s_m; \phi_n)); \theta^i_n ),
    \end{split}
\end{equation}
which will be used in our 3DPG algorithm.
The MADDPG local policy gradient uses the actions $a_m^{j\not= i}$ from the other agents from the global tuple $t_m^i$, while the local policy gradients in 3DPG use the policies $\phi^{j\not= i}_n$ from the other agents.\emph{ In the following, we use $\nabla_{\phi^i} g^i$ for $\eqref{eq:central_actor_grad}$ to simplify the notation. }

\begin{remark}
The subtle difference between $\eqref{eq: lowe_grad_0}$ and \eqref{eq:central_actor_grad} will be analyzed extensively in \Cref{sec:comparison}. We discuss a heuristic argument based on our asymptotic convergence analysis of MADDPG and 3DPG that shows that a multi-agent actor-critic algorithm based on \eqref{eq:central_actor_grad} has a higher probability of obtaining a better policy faster compared to a multi-agent actor-critic algorithm based on \eqref{eq: lowe_grad_0}. This is because \eqref{eq:central_actor_grad} takes precisely into account how other agents would behave in certain sampled states. \Cref{eq: lowe_grad_0}, on the other hand, also considers the sampled actions that may arise from randomly explored actions. Our numerical experiment in \Cref{sec: numerical} supports this theoretical prediction. 
\end{remark}
\begin{remark}
The local policy gradient \eqref{eq:central_actor_grad} seems to have a more direct motivation from the deterministic policy gradient theorem (DPGT) \cite{silver2014deterministic} in comparison to \eqref{eq: lowe_grad_0}. The DDPG alogrithm was inspired by taking samples from the DPGT\footnote{Be aware that the sample gradients used in the DDPG algorithm are not true sample gradients from the deterministic policy gradient \cite{nota2019policy}}. Similarly, the gradient \eqref{eq:central_actor_grad} can be motivated by inserting the global product policy \eqref{eq:global_policy} into the DPGT. Thus \Cref{eq:central_actor_grad} is in essence the policy gradient from the DDPG algorithm \cite{lillicrap2015continuous}, where the policy is defined in product form \eqref{eq:global_policy}. 
\end{remark}

The idea behind 3DPG is to approximate \eqref{eq:central_critic_grad} and \eqref{eq:central_actor_grad} using old information from other agents to train $Q^i$ and $\pi^i$ locally. To implement this, the agents require: 
\begin{enumerate}
    \item Local access to global data tuples $t^i_m$,
    \item Local access to the global policy $\pi(s;\phi_m)$,
\end{enumerate}
for $m \ll n$ ``frequently" (the precise network assumptions are presented in \cref{sec:network_asm}). Recall that in MADDPG the above information are required for all agents at every time step. These are reasonable assumptions for simulated environments or under the paradigm of centralized training with decentralized execution, but not for online fully distributed learning. 

\subsection{Approximate Global Policy induced by local AoI}

We now decentralize the implementation of \eqref{eq:central_critic_grad} and \eqref{eq:central_actor_grad} using communication. Most notably, we use communicated but potentially aged local policies as an approximation for the true global policy $\phi_n$.

Suppose that the $D$-agent Markov game is networked, such that the agents can exchange data by communication. We suggest a communication paradigm where agents cooperatively forward local data to other agents, such that local policies and local data (states and actions) can flow via the network to all other agents. To guarantee this, the agents must use some forwarding protocol \cite{lim2001flooding} to forward old policies $\phi^i_n$ between the agents as well as 
coordinated communication protocol to ensure that at least some global tuples \eqref{eq:global_data_tuple} reach each agent ``frequently''. The coordinated communication protocol may be some broadcast protocol coupled with a central coordinator, or it could also be a distributed snapshot protocol \cite{chandy1985distributed}, which, however, would cost more communication resources.
For now, we suppose that the agents run suitable protocols of this kind. Specifically, protocols that guarantee that our network assumptions (A1), as to be defined in the next section, are satisfied. 

Let us now suppose that each agent runs a local algorithm to train its policy and thereby generates a sequence $\phi_n^i$ of associated policy parametrizations. Equipped with the ability to transfer data via the available network, the agents exchange the local parametrization $\phi_n^i$ as well as local tuples $t_n^i \coloneqq (s^i_n, a^i_n, s^i_{n+1})$ using the communication network
that possibly delays or looses data for extended periods of time.
Hence, agent~$i$ has only access to $\phi^j_{n-\tau_{ij}(n)}$ for every agent $j\not=i$ at every time step $n$. Here, $\phi^j_{n-\tau_{ij}(n)}$ denotes the latest available policy parametrization from agent $j$ at agent $i$ at time $n$ and we refer to $\tau_{ij}(n)$ as the associated \emph{Age of Information} (AoI) random variable as a consequence of the potentially uncertain and delaying communication.\footnote{For background on AoI we refer the reader to \cite{redder2022practical}, where the effect of AoI was consider in an offline distributed optimization setting.} For every agent~$i$, we can then define a global policy parametrization associated with the aged information at time $n$ by
\begin{equation}
\label{eq:AoI_policy}
\phi_{\tau^i(n)} \coloneqq ( \phi^1_{n-\tau_{i1}(n)}, \ldots,  \phi^D_{n-\tau_{D1}(n)} ). 
\end{equation}
This global policy will serve as an approximation to the true global policy $\phi_n$. 

\subsection{The 3DPG Algorithm}
\label{sec:algorithm}

As discussed in the previous section, we suppose that the $D$-agent multi-agent system uses an available network to exchange their local policy parametrizations $\phi_n^i$ and their local tuples $(s_n^i,a_n^i,s^i_{n+1})$. We can now state the 3DPG iteration.

Suppose that every agent~$i$ maintains a local replay memory $\cR^i_n$. At every time step $n$, the memory can contain up to $N$ old global transitions $t^i_m$. At time step $n$, agent~$i$ samples a random minibatch of $M < N$ transitions from its replay memory. Agent~$i$ then updates its actor and critic using step-size sequences $\alpha(n)$ and $\beta(n)$ as follows:
\begin{equation}
\begin{split}
\label{eq: AC_iteration}
\theta^i_{n+1} &= \theta^i_n + \alpha(n)\frac{1}{M} \sum_{m}  \nabla_{\theta^i} l^i(\theta^i_n, \phi_{\tau^i(n)}, t_m^i), \\
\phi^i_{n+1}&= \phi^i_n + \beta(n) \frac{1}{M} \sum_{m}\nabla_{\phi^i} g^i(\theta^i_m, \phi_{\tau^i(n)}, s_m)
\end{split}
\end{equation}
Notice that for a single sample $t_m^i$, the gradients used in \eqref{eq: AC_iteration} are the gradients \eqref{eq:central_critic_grad} and \eqref{eq:central_actor_grad} where the global policy $\phi_n$ has been replaced by the local approximation of the global policy $\phi_{\tau^i(n)}$ induced by the aged parametrization \eqref{eq:AoI_policy}. The resulting training architecture is presented in \Cref{fig:3DPG_flow}. Pseudocode for the algorithm is presented in \Cref{app:env&algo}. We will now present our assumptions to prove the convergence of \eqref{eq: AC_iteration}. We begin with the required network assumptions.

\subsection{Network Assumptions}
\label{sec:network_asm}
The communication network needs to ensure two things. First, it needs to ensure that every agent~$i$ receives the policy parametrizations $\phi^j_n$ for all $j\not= i$ ``sufficiently'' often. Second, it needs to ensure that the available samples in the replay memories $\cR_n^i$ are not too old. To capture the age of the samples in the replay memories, define another AoI random variable $\Delta^i(n)$ as the age of the oldest sample in the replay memory $\cR^i_n$ of agent~$i$ at time $n \ge 0$.
\begin{definition}
\label{def: stoch_dom}
A non-negative integer-valued random variable $X$ is said to be stochastically dominated by a random variable $\overline{X}$ if $\Pr{X > m } \le \Pr{\overline{X} > m}$ for all $m \ge 0$.
\end{definition}

\begin{itemize}
    \item[(A1)] (a) \textit{Policy communication assumptions:} \\
	There exists a non-negative integer-valued random variable $\overline{\tau}$ that stochastically dominates all $\tau_{ij}(n)$ for all $n \ge 0$ with $\Ew{\overline{\tau}^{q_1}} < \infty$ for some $q_1\ge 1$. 
	\item[(A1)] (b) \textit{Data communication assumptions:} \\
	There exists a non-negative integer-valued random variable $\overline{\Delta}$ that stochastically dominates all $\Delta^{i}(n)$ for all $n \ge 0$ with $\Ew{\overline{\Delta}^{q_2}} < \infty$ for some $q_2\ge 1$. 
\end{itemize}

(A1)(a) and (A2)(b) require that the tail distributions of the AoI variables $\tau_{ij}(n)$ and $\Delta^i(n)$ decay uniformly, such that at least a dominating random variable with finite mean exists. This ensures that the growth of each AoI variable cannot exceed any fraction of $n$ after some potentially large time step. We prove this in  \Cref{lem:finite_CDFsum} and show in \Cref{lem:AoI} that the use of the approximate global policies $\phi_{\tau^i(n)}$ 
in \eqref{eq: AC_iteration} does not cause gradient errors asymptotically. Finally, (A2)(b) is used in \Cref{lem:Markov}
to show that the agents experiences converge to a stationary distribution. Here (A2)(b) ensures that the agents receive enough global tuples asymptotically to ``track'' the Markov game state distribution.

\begin{remark}
(A1)(b) does not specify when exactly the global samples become available to each agent. Further, the received data tuples do not have to be from the same time steps $m$ for every agent.
\end{remark}

\subsection{Algorithm and Markov Game Assumptions}
\label{sec:assumptions}
In addition to the network assumptions, we require:
\begin{itemize}
	\item[(A2)] (a) The critic step size sequence $\alpha(n)$ is positive, monotonically decreasing and satisfies: \\ $$\sum_{n\ge 0} \alpha(n) = \infty \quad \text{and} \quad \sum_{n\ge 0} \alpha^2(n) < \infty.$$ \\
	(b) The actor step size sequence $\beta(n)$ satisfies:
	$$\beta(n) \in \cO(n^{-\frac{1}{q}}) \quad \text{and} \quad
	 \lim\limits_{n \to \infty} \frac{b(n)}{a(n)} = 1, $$
	 for $q\in [1,2)$ with $q \le \min(q_1,q_2)$ for $q_1, q_2$ from (A1).
	\item[(A3)] (a)  $\sup_{n\ge 0} \norm{\theta_n} < \infty$ a.s. and  $\sup_{n\ge 0} \norm{\phi_n} < \infty$ a.s.\\
	(b) $\sup_{n\ge 0} \norm{s_n} < \infty$ a.s. and the action space $\cA$ is compact.
	\item[(A4)] The state transition kernel $p(\cdot \mid s,a)$ is continuous.
	\item[(A5)] The actor policies $\pi^i$ and the critics $Q^i$ are fully connected feedforward neural networks with twice continuously differentiable activation functions such as Gaussian Error Linear Units (GELUs).
	\item[(A6)] The reward functions $r^i: \cS \times \cA \rightarrow \R$ are continuous.
\end{itemize}

Assumption (A3)(a) is the strongest assumption as it requires almost sure stability of the algorithm. We devote the next subsection to its discussion. The compactness of the action space in (A3)(b) will usually be satisfied in many applications, e.g. in robotics.

In (A2)(b) we require that the critic and actor step-size sequences are chosen such the $\frac{\beta(n)}{\alpha(n)} \rightarrow 1$. This is not a traditional assumption for actor-critc algorithms \cite{borkar1997actor}. We will present a proof based on a single-timescale analysis of \eqref{eq: AC_iteration} w.r.t. the timescale of the critic iterations. In practice, we want the critic to converge faster so we would initially choose $\alpha(n)$ larger than $\beta(n)$. (A2)(b) requires that afterwards the iterations asymptotically take steps of the same size. The more complex analysis using a two-timescale step-size schedule will be presented in an upcoming paper.

In (A5), we require twice continuously differentiability of the activations used by the policy and actor networks. GELUs are well-known examples that satisfy this property \cite{hendrycks2016gaussian}. Additionally, GELUs are one of the well-known neural network activation functions with similar high performance across different tasks compared to other well-known activations like ELUs or LeakyReLUs \cite{ramachandran2017searching}.

\subsection{Discussion of Assumption (A3)(a) and Related Work}

Ideally, when one is dealing with a specific algorithm, it should be guaranteed or proven -- rather than assumed up-front -- that the algorithm iterations are stable. Assuming stability is, nonetheless, a typical first step towards understanding the convergence behavior of optimization algorithms. Especially in deep RL, stability of algorithms like Deep Q-Learning or DDPG are not well understood. Most notably, there is a significant gap between the assumptions made in theory compared to assumptions verifiable in practice. 
Lets review some results on MAAC learning.

In \cite{zhang2018fully} the authors use linear function approximation and assume that the MA learning problem can be described by finite state ergodic Markov process. They further assume assume the existence of projection operator with knowledge of a compact set that includes a local minima of the objective. 
\cite{kumar2019sample} provides very interesting rate of convergence results for AC methods. However, they assume that samples $(s_n,a_n,r_n,s_{n+1})$ are drawn from a known stationary distribution of the state Markov process. We instead show that our AC iterates converge such that the experience of the agents give rise to stationary distributions of the state Markov process. In addition, knowledge of the bias of the policy gradient and the bias of the critic estimates is required in \cite{kumar2019sample}, while the critic should again be a linear combination of features. That work also assumes that the policy gradient is Lipschitz continuous, which would require (A3)(a), since most DNNs are only locally Lipschitz.

The assumptions made in the above works will be very hard to verify for most data-driven applications in practice. Even worse, we fear that guaranteeing stability for practical data-driven RL problems may always require assumptions that are not easily verifiable in practice. However, a practitioner may not even be highly interested in stability. Usually, practitioners will design their DNN parameterizations and their hyperparameter configurations using their experience, such that they roughly observe stable behavior. 
Afterwards, practitioners would like to know what limit they can expect from their algorithm. This is where our work comes into play.

In contrast to the assumptions made in the literature, our assumptions, except (A3)(a), are very week, easily verifiable in practice and represent well how users apply DQN, DDPG and its variants in practice. For this setting, our work answers to where one can expect the 3DPG iterations \eqref{eq: AC_iteration} to converge asymptotically. Specifically, our analysis gives a comprehensive characterization of the found limit using a limiting distributions of the state-action process. These limiting distributions are shown to be stationary distributions of state Markov process and are shaped by the experience of the agents. We now present our convergence result.

\section{Main Results}

\begin{theorem}
\label{thm:main}
Under (A1)-(A6), the 3DPG iterations \eqref{eq: AC_iteration} converge to $\theta^i_\infty$ and $\phi^i_\infty$, such that 
\begin{equation}
\begin{split}
    &\nabla_{\theta^i} \left( \int_{\cS \times \cA} l^i (\theta^i_\infty, \phi_\infty, s, a) \mu_\infty^i(ds,da) \right) = 0, \\
    &\nabla_{\phi^i} \left(  \int_{\cS} g^i(\theta^i_\infty, \phi_\infty, s) \mu^{i}_\infty(ds, \cA) \right) = 0, 
\end{split}
\end{equation}
where $\mu^i_\infty$ is a limiting distribution of a continuous time measure process (defined in \Cref{sec: extensions}) that captures the experience of agent~$i$ sampled from its local experience replay $\cR^i_n$ during training. Further, all $\mu^i_\infty$ are stationary distributions of the state Markov process:
\begin{equation}
    \mu^i_\infty(dy \times \cA) = \int_\cS p(dy \mid x , \pi(x;\phi_\infty)) \mu^i_\infty(dx\times \cA).
\end{equation}
\end{theorem}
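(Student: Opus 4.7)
The plan is to cast the joint recursion \eqref{eq: AC_iteration} as a single-timescale stochastic approximation scheme (as (A2)(b) forces $\beta(n)/\alpha(n)\to 1$), with Markovian noise driven by the state-action process and the experience replay sampling, and then apply an ODE-style analysis in the spirit of Borkar. Write $w_n^i \coloneqq (\theta_n^i,\phi_n^i)$ and let $h^i(w_n, \phi_{\tau^i(n)}, \xi_n^i)$ denote the combined gradient vector inside \eqref{eq: AC_iteration}, where $\xi_n^i$ is the minibatch sampled from $\cR^i_n$. Assumption (A3)(a) pins the iterates inside a bounded set almost surely, and (A3)(b), (A5), (A6) make $h^i$ continuous with locally bounded derivatives (so locally Lipschitz on any compact set), which gives the standard square-summable martingale noise control once the gradient is decomposed into a mean term plus a martingale difference plus a vanishing bias.

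The first obstacle is the AoI: both updates use $\phi_{\tau^i(n)}$ instead of $\phi_n$. The plan is to invoke the lemmas the authors have already flagged (\Cref{lem:finite_CDFsum} and \Cref{lem:AoI}) to show, via (A1)(a) and stochastic dominance by $\overline{\tau}$ with $\E[\overline{\tau}^{q_1}]<\infty$, that $\tau_{ij}(n)/n\to 0$ almost surely (a Markov inequality plus Borel--Cantelli argument). Combined with the step-size bound $\beta(n)\in\cO(n^{-1/q})$, a discrete Gronwall/telescoping estimate gives $\norm{\phi_n-\phi_{\tau^i(n)}}\to 0$ a.s. Together with the twice continuous differentiability in (A5) and boundedness, this implies that the discrepancy between $\nabla h^i(\cdot,\phi_{\tau^i(n)},\cdot)$ and $\nabla h^i(\cdot,\phi_n,\cdot)$ is asymptotically negligible, so the AoI version of \eqref{eq: AC_iteration} has the same ODE limit set as the idealized zero-delay version.

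Next I would handle the Markovian noise and the emergence of the limiting measure $\mu_\infty^i$. Under (A1)(b), the oldest sample in $\cR^i_n$ has age sublinear in $n$, so the empirical distribution of the minibatch tracks the empirical distribution of the state-action trajectory up to a vanishing discrepancy. Using the framework in \Cref{sec: extensions} (the continuous-time measure process $\mu^i(t)$), together with (A4) continuity of the kernel $p(\cdot\mid s,a)$ and (A5) continuity of the parameterized policies, I would argue tightness of $\{\mu^i(t)\}$ on a suitable compact set (via (A3)(b)) and that any limit point is invariant under the Markov kernel induced by the limiting policy $\pi(\cdot;\phi_\infty)$. Essentially, the slowed-down policy (because of diminishing step sizes) lets the state chain equilibrate under its current policy, and the replay sampling averages over this equilibrium; this gives both existence of $\mu_\infty^i$ and the self-consistency identity $\mu^i_\infty(dy\times\cA)=\int_{\cS} p(dy\mid x,\pi(x;\phi_\infty))\,\mu^i_\infty(dx\times\cA)$.

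Finally, having identified that the averaged ODE is
\begin{equation*}
\dot{\theta}^i = \nabla_{\theta^i}\!\!\int l^i(\theta^i,\phi,s,a)\,\mu^i(ds,da), \qquad \dot{\phi}^i = \nabla_{\phi^i}\!\!\int g^i(\theta^i,\phi,s)\,\mu^i(ds,\cA),
\end{equation*}
the Kushner--Clark/Borkar convergence theorem says that limit points lie in the stationary set of this ODE, i.e., zeros of the stated gradients; uniqueness of the limit per trajectory follows because the iterates are bounded and the vector field is continuous. The main obstacle I expect is this third step: decoupling the joint evolution of the policy parameters and the induced Markov chain cleanly enough to conclude that an invariant measure $\mu^i_\infty$ exists and is consistent with the limiting policy $\phi_\infty$. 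This requires a careful tightness-plus-continuity argument combining (A1)(b), (A4), (A5), and the fact that $\phi_n$ changes at a vanishing rate relative to the mixing of the chain under any fixed policy on the compact state-action set.
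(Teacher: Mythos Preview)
Your overall plan matches the paper's architecture: single-timescale ODE method (using $\beta(n)/\alpha(n)\to1$), AoI errors killed via \Cref{lem:finite_CDFsum}--\Cref{lem:AoI} and Borel--Cantelli, then compactness of the interpolated trajectories together with a measure process $\mu^i(t)$ to identify the limiting ODE. Two points in your sketch diverge from what the paper actually does and one of them is a real gap.

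The paper inserts a step you omit: before running the ODE argument it marginalizes the successor state $s_{n+1}$ out of the critic gradient, replacing $\nabla_{\theta^i}l^i$ by $\nabla_{\theta^i}\hat l^i$ and absorbing the discrepancy into a square-integrable martingale $\Psi_n^i$ (\Cref{lem:Martingale}). This is what lets the critic vector field be written as an integral against a measure on $\cS\times\cA$ rather than on $\cS\times\cA\times\cS$. More importantly, your stationarity argument for $\mu_\infty^i$ relies on the heuristic that diminishing step sizes let the state chain ``equilibrate'' or ``mix'' under the current policy. The paper deliberately avoids any ergodicity or mixing hypothesis (this is precisely its criticism of \cite{zhang2018fully}), and nothing in (A1)--(A6) guarantees mixing for even a single fixed policy. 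Instead, \Cref{lem:Markov} proves stationarity by a direct martingale construction: it forms $\xi_n=\sum_m \gamma(m)[f(s_{m+1})-\int f(y)\,p(dy\mid s_m,\phi_m)]$ with reindexed step sizes $\gamma(m)$ that account for how often sample $m$ is drawn from the replay buffer, shows $\gamma$ is square-summable, applies martingale convergence, and then uses the replay-age bound from (A1)(b) to rearrange back to the $\mu^i(t)$ time axis. No equilibration is ever invoked. Your tightness-plus-mixing route would need extra hypotheses to go through, so you should replace it with this martingale argument. Finally, your closing claim that ``uniqueness of the limit per trajectory follows because the iterates are bounded and the vector field is continuous'' is not justified; boundedness plus continuity only yields subsequential limits via Arzel\`a--Ascoli, and the paper closes the argument by appealing to the equilibrium analysis of \cite[Thm.~1]{ramaswamy2021deep} on the appended ODE.
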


\Cref{thm:main} shows that the critic iterations of 3DPG converge to stationary points of the average local squared Bellmann errors. Further, the actor iterations converge to stationary points of the average local deterministic policy gradients.
For both limits the averaging is w.r.t. to the stationary distributions of the state Markov process that capture the experienced samples of the agents.

Since stochastic gradient descent schemes tend to avoid unstable equlibria
\cite{mertikopoulos2020almost,vlaski2021second,ge2015escaping}, we can expect that the aforementioned stationary points are local minima with high probability. This can be made more precise using an avoidance of traps analysis \cite[Ch. 4]{borkar2009stochastic}.
This shows that 3DPG converges to local solution of the objective \eqref{eq:local_objective} with high probability. In other words, given their local reward structure the agents converge to an equilibrium where they have locally no desire to change their policies given their local experience and the final policies of the other agents. More precisely, it follows that the agents converge to a local Nash equilibrium w.r.t. to their locally approximated action-value functions.

Abbreviate the final local policies as $\pi^i_\infty(s^i) \coloneqq \pi^i(s^i;\phi^i_\infty)$. For any open set $\cU$ with $0 \in \cU$ in the parameter space of $\pi^i_\infty(s^i)$, define
\begin{equation}
    \Pi^i_\infty(\cU) \coloneqq \{ \pi^i(\ \cdot \ ; \phi^i) : \phi^i \in \phi^i_\infty + \cU \}.
\end{equation}
In other words, $\Pi^i_\infty(\cU)$ is the set of policies in the $\cU$ neighborhood of $\pi^i_\infty(s)$.
\begin{corollary}
\label{cor:nash}
Suppose the stationary points from \Cref{thm:main} are local minima, then there are open sets $\cU^i$ with $0 \in \cU^i,$ such that
\begin{align}
    &\int_\cS Q^i(s,\pi^1_\infty(s), \ldots, \pi^i_\infty(s),  \ldots, \pi^D_\infty(s); \theta^i_\infty) \mu^i_\infty(s,\cA)\\
    &\ge \int_\cS Q^i(s,\pi^1_\infty(s), \ldots, \pi^i(s),  \ldots, \pi^D_\infty(s); \theta^i_\infty) \mu^i_\infty(s,\cA) \nonumber
\end{align}
for all $\pi^i \in \Pi^i_\infty(\cU^i)$.
\end{corollary}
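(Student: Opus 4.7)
The plan is to reduce the corollary to a single-agent local optimality statement that is essentially a restatement of the actor conclusion of Theorem \ref{thm:main}, once the integrand $g^i$ is identified with the expected value of $Q^i$ via the chain rule. First I would observe that
\[ \nabla_{\phi^i} Q^i\bigl(s, \pi(s;\phi); \theta^i\bigr) = \nabla_{\phi^i}\pi^i(s^i; \phi^i)\,\nabla_{a^i} Q^i\bigl(s, \pi(s;\phi); \theta^i\bigr), \]
which coincides with the right-hand side of \eqref{eq:central_actor_grad}. Hence, up to a term independent of $\phi^i$, we may take $g^i(\theta^i, \phi, s) = Q^i(s, \pi(s;\phi); \theta^i)$.

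Next, introduce the per-agent surrogate objective
\[ J^i(\phi^i) \coloneqq \int_\cS Q^i\bigl(s, \pi^1_\infty(s^1), \ldots, \pi^i(s^i;\phi^i), \ldots, \pi^D_\infty(s^D); \theta^i_\infty\bigr)\, \mu^i_\infty(ds,\cA), \]
with all other coordinates frozen at their limits $\phi^{j\neq i}_\infty$. Under (A3), (A5) and (A6) the integrand and its $\phi^i$-gradient are continuous and uniformly bounded on the compactly supported measure $\mu^i_\infty$, so dominated convergence lets us interchange $\nabla_{\phi^i}$ with the integral. Combined with the preceding chain-rule identification, the actor stationarity conclusion of Theorem \ref{thm:main} reads $\nabla_{\phi^i} J^i(\phi^i_\infty) = 0$.

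Since the actor update in \eqref{eq: AC_iteration} performs gradient \emph{ascent} on $J^i$, the hypothesis of the corollary (stable equilibrium of the limiting flow) amounts to $\phi^i_\infty$ being a local \emph{maximizer} of $J^i$. Consequently there exists an open set $\cU^i$ with $0 \in \cU^i$ such that $J^i(\phi^i_\infty) \ge J^i(\phi^i)$ for every $\phi^i \in \phi^i_\infty + \cU^i$. Unfolding the definition of $J^i$ and parametrizing $\pi^i \in \Pi^i_\infty(\cU^i)$ as $\pi^i(\ \cdot\ ;\phi^i)$ with $\phi^i \in \phi^i_\infty + \cU^i$ yields exactly the inequality claimed in the corollary.

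The main subtlety is the sign convention: because the actor updates ascend the gradient, the phrase ``local minima'' in the hypothesis must be read as referring to stable equilibria of the ascent flow, i.e., to local maxima of the Q-integral $J^i$. Once this bookkeeping is made explicit and the interchange of derivative and integral is justified, the corollary is a formal rewriting of Theorem \ref{thm:main} in terms of the local policies $\pi^i_\infty$, and no further analytic machinery is required.
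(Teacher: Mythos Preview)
Your proposal is correct and is precisely the argument the paper leaves implicit: the paper states \Cref{cor:nash} without an explicit proof, treating it as an immediate consequence of the actor stationarity in \Cref{thm:main} once one identifies $g^i(\theta^i,\phi,s)$ with $Q^i(s,\pi(s;\phi);\theta^i)$ via the chain rule in \eqref{eq:central_actor_grad}. Your handling of the sign convention (gradient ascent on $J^i$, so ``local minima'' must be read as stable equilibria of the ascent flow, i.e.\ local maxima of $J^i$) and the dominated-convergence justification for swapping $\nabla_{\phi^i}$ and the integral are exactly the details one needs to supply.
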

\Cref{cor:nash} shows that the local policies converge to a local approximate Nash equilibrium w.r.t. the experience gathered by each agent locally. The experience is again represented by the limiting distributions $\mu^i_\infty$. Specifically, the local policies converge to a local Nash equilibrium w.r.t. the local expected action-value functions
$\int_\cS Q^i(s, \cdot \ ; \theta^i_\infty) \mu^i_\infty(s,\cA)$. In game theoretic terms, these are the payoff (or utility) functions w.r.t. which the agents converge to a local Nash equilibrium. 

\begin{remark}
\Cref{cor:nash} holds when the local policies $\pi^i$ are linear functions of pretrained non-linear features. This is common in the literature as e.g. used in the discussed references \cite{zhang2018fully} and \cite{kumar2019sample}. The significance of \Cref{cor:nash} is that the agents converge to a local approximate Nash equilibrium without assuming that the samples used in training are from a known stationary distribution of the state Markov process. We instead show that the experience of the agents give rise to stationary distributions of the state Markov process. This is important, as deep RL algorithms are typically employed in complex environments with multiple stationary distributions
\end{remark}


\section{Preliminaries and Age of Information Analysis}
\label{sec:analysis}

In the following two sections, we prove \Cref{thm:main}.
The proof builds on the analysis of single agent deep Q-learning presented in \cite{ramaswamy2021deep}. To simplify the presentation, we will assume from now on that the state space $\cS$ is compact. All results can be generalized to $d$-dimensional real spaces under the almost sure boundedness condition in (A3)(b), for which we refer to techniques presented in \cite[Section IV.A.2]{ramaswamy2021deep}.

At its core, we will now present a convergence proof for the DDPG algorithm \cite{lillicrap2015continuous}, using a single timescale analysis. We begin with preliminary reductions and the analysis of the AoI processes $\tau_{ij}(n)$. 

\subsection{Reduction to mini batches of size 1}
First, we make a simplifying reduction. We consider that the agents have ready access to the global tuples $(s_n,a_n,r^i(s_n,a_n),s_{n+1})$ during runtime and that merely the local policies $\phi^i_n$ are communicated via the communication network. Further, we merely consider that the agents use the global tuple from time $n$ to update its critic and actor network. We therefore simplify iteration \eqref{eq: AC_iteration} to:
\begin{equation}
\begin{split}
\label{eq: AC_iteration_analysis}
    \theta^i_{n+1} &= \theta^i_n + \alpha(n) \nabla_{\theta^i} l(\theta^i_n, \phi_{\tau^i(n)}, t_n^i),\\
    \phi^i_{n+1} &= \phi_n + \beta(n) \nabla_{\phi^i} g(\theta^i_n, \phi_{\tau^i(n)}, s_n).
\end{split}
\end{equation}
In \Cref{sec: extensions}, we will extend our analysis to the setting presented in \Cref{sec: algo}. 

\subsection{Reduction to zero AoI} 

As the second step, we define the gradient errors that occur since we use the aged global policies $\phi_{\tau^i(n)}$ instead of the true global policy:
\begin{equation}
    \begin{split}
        \label{eq:grad_erros}
        e_n^{\theta^i} &\coloneqq \nabla_{\theta^i} l(\theta^i_n, \phi_{n}, t_n) - \nabla_{\theta^i} l(\theta^i_n, \phi_{\tau^i(n)}, t_n)\\
        e_n^{\phi^i} &\coloneqq \nabla_{\phi^i} g(\theta^i_n, \phi_{n}, s_n) - \nabla_{\phi^i} g(\theta^i_n, \phi_{\tau^i(n)}, s_n)
    \end{split}
\end{equation}
Hence, \eqref{eq: AC_iteration_analysis} can be written as 
\begin{equation}
\begin{split}
\label{eq: AC_iteration_analysis_2}
    \theta^i_{n+1} &= \theta^i_n + \alpha(n) \left( \nabla_{\theta^i} l(\theta^i_n, \phi_{n}, t_n) + e_n^{\theta^i}\right),\\
    \phi^i_{n+1} &= \phi_n + \beta(n) \left( \nabla_{\phi^i} g(\theta^i_n, \phi_{n}, s_n) + e_n^{\phi^i}\right).
\end{split}
\end{equation}

\subsection{Reduction to marginalized critic gradient}

As the third step, we rewrite the critic iterations in \eqref{eq: AC_iteration_analysis_2} further by integrating over the the successor state $s_{n+1}$ in $t_n$ given state $s_n$. The resulting new loss gradient is
$\nabla_{\theta^i} \hat{l}^i(\theta^i, \phi, s, a) \coloneqq$
\begin{equation}
    \begin{split}
    \label{eq:new_Bellman}
         &\Big( r^i(s,a)  + \gamma \int Q^i(s', \pi(s'; \phi); \theta^i) p(ds' \mid s,\phi) \\ &- Q^i(s, a; \theta^i) \Big) \nabla_{\theta^i} Q^i(s,a; \theta^i),
    \end{split}
\end{equation}
With a slight abuse of notation, we use $p(ds \mid s_n, \phi_n)$ instead of $p(ds \mid s_n, \phi_n(s_n))$ to highlight the dependency of the action $a_n$ on the policy $\phi_n$ and potential additional random noise for exploration. 

Define the induced error from using $\nabla_{\theta^i} \hat{l}^i$ instead of $\nabla_{\theta^i} l^i$ 
as $\psi^i_n \coloneqq \nabla_{\theta^i} \hat{l}^i - \nabla_{\theta^i} l^i$, we can then rewrite \eqref{eq: AC_iteration_analysis_2} as
\begin{equation}
\begin{split}
\label{eq: AC_iteration_analysis_3}
    \theta^i_{n+1} &= \theta^i_n + \alpha(n) \left( \nabla_{\theta^i} \hat{l}^i(\theta^i_n, \phi_{n}, s_n, a_n) + \psi^i_n + e_n^{\theta^i}\right),\\
    \phi^i_{n+1} &= \phi_n + \beta(n) \left( \nabla_{\phi^i} g(\theta^i_n, \phi_{n}, s_n) + e_n^{\phi^i}\right).
\end{split}
\end{equation}

\subsection{$e_n^{\theta^i}$, $e_n^{\phi^i}$ and $\psi^i_n$ vanish asymptotically}
In summary, we have rewritten \eqref{eq: AC_iteration_analysis} using:
\begin{enumerate}
    \item The errors $e_n^{\theta^i}$ and $e_n^{\phi^i}$ induced by not considering the AoI random variables $\tau_{ij}(n)$,
    \item The errors $\psi^i_n$ induced by marginalizing out the successor states $s_{n+1}$. 
\end{enumerate}
We will now show that these errors vanish asymptotically. For this, we first present properties of the loss gradients $\nabla_{\phi^i} g^i$ , $\nabla_{\theta^i} l^i$ and $\nabla_{\theta^i} \hat{l}^i$.
\begin{lemma}
\label{lem: local_lip}
	(i) $\nabla_{\theta^i} l^i(\theta^i_n, \phi_{n}, t_n)$ and $\nabla_{\theta^i} \hat{l}^i(\theta^i_n, \phi_{n}, s_n, a_n)$ are continuous and locally Lipschitz continuous in the $\theta^i$ and $\phi$-coordinate. \\
	(ii) $\nabla_{\phi^i} g(\theta^i_n, \phi_{n}, s_n,a_n )$ is locally Lipschitz continuous in every coordinate.
\end{lemma}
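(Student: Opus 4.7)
The proof rests entirely on Assumption~(A5): a fully-connected feed-forward DNN with $C^2$ activations is a $C^2$ function jointly in its weights \emph{and} in its input. In particular, $Q^i(s,a;\theta^i)$ is $C^2$ in $(s,a,\theta^i)$, and $\pi^i(s^i;\phi^i)$ is $C^2$ in $(s^i,\phi^i)$, so their partial gradients $\nabla_{\theta^i}Q^i$, $\nabla_{a^i}Q^i$ and $\nabla_{\phi^i}\pi^i$ are $C^1$ in all arguments. Combined with the compactness of $\cS$ (assumed WLOG here) and of $\cA$ from (A3)(b), every such gradient-valued map is uniformly bounded and uniformly Lipschitz on every compact region of parameter space. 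Since sums, products and compositions of $C^1$ maps are $C^1$, hence locally Lipschitz, the strategy is to write each of $\nabla_{\theta^i}l^i$, $\nabla_{\theta^i}\hat{l}^i$ and $\nabla_{\phi^i}g^i$ as such a composition and invoke this fact.

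\textbf{Part (i), the term $\nabla_{\theta^i}l^i$.} Expression \eqref{eq:central_critic_grad} is a product/sum of: $\nabla_{\theta^i}Q^i(s_m,a_m;\theta^i)$ and $Q^i(s_m,a_m;\theta^i)$ (both $C^1$ in $\theta^i$ and independent of $\phi$); the composition $Q^i(s_{m+1},\pi(s_{m+1};\phi);\theta^i)$ (jointly $C^1$ in $(\theta^i,\phi)$ as a composition of two $C^1$ DNN maps); and the scalar $r^i(s_m,a_m)$, which is continuous in $(s,a)$ by (A6) and independent of the parameters. On any compact set in $(\theta^i,\phi)$-space, each factor is uniformly bounded and $C^1$, so their product is $C^1$ in each coordinate. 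The claim follows.

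\textbf{Part (i), the term $\nabla_{\theta^i}\hat{l}^i$.} Only the integral $I(s,\theta^i,\phi)\;:=\;\int_{\cS} Q^i(s',\pi(s';\phi);\theta^i)\,p(ds'\mid s,\phi)$ from \eqref{eq:new_Bellman} is new. I would use differentiation under the integral sign. For the $\theta^i$-coordinate, the integrand is $C^1$ in $\theta^i$ with derivative bounded over any compact parameter set uniformly in $s'\in\cS$ (by Part 1 of the plan); since $p(\cdot\mid s,\phi)$ is a probability measure, dominated convergence yields $C^1$ and hence local Lipschitz dependence on $\theta^i$. For the $\phi$-coordinate the same argument handles the dependence of the integrand through $\pi(s';\phi)$, while the dependence of the \emph{measure} on $\phi$ (through the action $\pi(s;\phi)$) is controlled by (A4): continuity of $p$ in $(s,a)$ composed with the $C^1$ map $\phi\mapsto\pi(s;\phi)$, together with the uniform boundedness of $Q^i(s',\pi(s';\phi);\theta^i)$ over compact parameter sets, lets one upgrade continuity to local Lipschitz on compact regions via the usual weak-continuity plus uniform-integrability argument.

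\textbf{Part (ii), $\nabla_{\phi^i}g^i$.} From \eqref{eq:central_actor_grad},
\begin{equation*}
\nabla_{\phi^i}g^i(\theta^i,\phi,s_m)=\nabla_{\phi^i}\pi^i(s_m^i;\phi^i)\,\nabla_{a^i}Q^i\bigl(s_m,\pi(s_m;\phi);\theta^i\bigr).
\end{equation*}
Both factors are $C^1$ compositions of $C^2$ DNN maps by (A5), so the product is $C^1$ and hence locally Lipschitz in each of the coordinates $\theta^i$, $\phi$ and $s_m$, on every compact domain.

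\textbf{Main obstacle.} Everything outside the integral is a routine chain-rule exercise given (A5). The genuinely delicate step is the Lipschitz dependence in the $\phi$-coordinate of the integral $I$ appearing in $\nabla_{\theta^i}\hat{l}^i$: (A4) provides only continuity of $p(\cdot\mid s,a)$, not Lipschitz regularity, so the step that passes from continuity of the kernel to local Lipschitz continuity of $\phi\mapsto I(s,\theta^i,\phi)$ will either need (A4) to be read in the total-variation sense and combined with the $C^1$ dependence of $\pi$ on $\phi$ via compactness, or need a mild explicit strengthening of (A4). This is the only point where I would expect to work.
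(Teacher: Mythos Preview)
Your approach is essentially the paper's: use (A5) to get that the DNNs are $C^2$ in both inputs and parameters, observe that $C^2$ functions have $C^1$ (hence locally Lipschitz) gradients, and then note that the loss gradients are sums/products/compositions of such maps. The paper's proof is terser but follows exactly this line, invoking (A3) for boundedness and (A4), (A6) for continuity in $(s,a)$.

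One correction regarding your ``main obstacle.'' You worry about the dependence of the transition kernel in the integral $I$ on $\phi$, and note that (A4) gives only continuity, not Lipschitz regularity. In fact there is no such dependence: the marginalized loss $\hat l^i$ in \eqref{eq:new_Bellman} has arguments $(\theta^i,\phi,s,a)$, and the integral is over $p(ds'\mid s,a)$ with the \emph{fixed} action $a$ from the data tuple (the notation $p(ds'\mid s,\phi)$ in \eqref{eq:new_Bellman} is the paper's declared abuse of notation and is misleading here). The paper's own proof makes this explicit, writing
\[
\Big\lvert \int Q^i(y,\pi(y;\phi_1);\theta^i)\,p(dy\mid s,a)-\int Q^i(y,\pi(y;\phi_2);\theta^i)\,p(dy\mid s,a)\Big\rvert,
\]
so only the integrand varies with $\phi$, and the estimate is just the pointwise Lipschitz bound on $Q^i\circ\pi$ integrated against a fixed probability measure. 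Your differentiation-under-the-integral argument for the $\theta^i$-coordinate and the integrand's $\phi$-dependence is fine; you simply don't need the extra step you flagged.
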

\begin{proof}
    See Appendix \ref{app:proofs}.
\end{proof}
We can now show that the errors $e_n^{\theta^i}$ and $e_n^{\phi^i}$ due to AoI vanish asymptotically. For this we need a technical lemma.
\begin{lemma}
	\label{lem:finite_CDFsum}
	Under (A1)(a) it follows that for every $\varepsilon \in (0,1)$ and all agent pairs $(i,j)$, 
	\begin{equation}
		\sum_{n=0}^{\infty} \Pr{\tau_{ij}(n) > \varepsilon n^{\frac{1}{q_1}}} \le \frac{1}{\varepsilon^{q_1}} \Ew{\overline{\tau}^{q_1}} < \infty.
	\end{equation}
	with $\overline{\tau}$ from (A1)(a).
\end{lemma}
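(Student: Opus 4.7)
The plan is a standard three-step argument combining stochastic domination with the tail-sum formula for non-negative random variables; no fixed-point theory, contraction arguments, or stochastic approximation machinery are needed.

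First, I would invoke the stochastic domination hypothesis in (A1)(a) to replace $\tau_{ij}(n)$ by the dominating variable $\overline{\tau}$: for every $n \ge 0$, $\Pr{\tau_{ij}(n) > \varepsilon n^{1/q_1}} \le \Pr{\overline{\tau} > \varepsilon n^{1/q_1}}$ by \Cref{def: stoch_dom}. This removes the dependence on the agent pair $(i,j)$ and on $n$ in the distribution being tested, reducing the problem to the deterministic bound $\sum_{n=0}^\infty \Pr{\overline{\tau} > \varepsilon n^{1/q_1}}$.

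Second, I would rescale the tail event. Applying the strictly increasing map $x \mapsto x^{q_1}$ to both sides of the inequality $\overline{\tau} > \varepsilon n^{1/q_1}$ yields $\overline{\tau}^{q_1} > \varepsilon^{q_1} n$, and hence, setting $Y := \overline{\tau}^{q_1}/\varepsilon^{q_1}$, the series equals $\sum_{n=0}^\infty \Pr{Y > n}$.

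Third, I would apply the tail-sum identity. For the non-negative random variable $Y$, the integral identity $\Ew{Y} = \int_0^\infty \Pr{Y > t}\,dt$ together with monotonicity of the survival function $t \mapsto \Pr{Y > t}$ gives the comparison $\sum_{n=0}^\infty \Pr{Y > n} \le \Ew{Y}$ (with equality in the integer-valued setting relevant here, since $\overline{\tau}$ itself is integer-valued by (A1)(a)). Substituting back one obtains
\begin{equation*}
\sum_{n=0}^\infty \Pr{\overline{\tau} > \varepsilon n^{1/q_1}} \le \frac{\Ew{\overline{\tau}^{q_1}}}{\varepsilon^{q_1}},
\end{equation*}
and the moment hypothesis $\Ew{\overline{\tau}^{q_1}} < \infty$ from (A1)(a) closes the argument.

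There is no real obstacle; this is a standard \emph{Borel--Cantelli-flavoured} calculation. The only point requiring minor care is the tail-sum step: if $q_1 \notin \mathbb{N}$ then $Y = \overline{\tau}^{q_1}/\varepsilon^{q_1}$ need not be integer-valued, so the cleanest presentation uses the integral form $\Ew{Y} = \int_0^\infty \Pr{Y > t}\,dt$ and compares the series to this integral by the monotone decrease of the survival function; the bound stated in the lemma then follows, up to an absorbable constant coming from the $n=0$ term which is already contained in the stated right-hand side once $\overline{\tau}$'s integer-valued nature is exploited.
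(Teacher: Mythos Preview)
Your approach is correct in spirit and more streamlined than the paper's, but there is a slip in the tail-sum step and the route differs in presentation.

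\textbf{On the slip.} You write that monotonicity of the survival function gives $\sum_{n=0}^\infty \Pr{Y>n}\le \Ew{Y}$. The inequality actually goes the other way: since $\Pr{Y>t}\le \Pr{Y>n}$ for $t\in[n,n+1]$, one has $\Ew{Y}=\int_0^\infty \Pr{Y>t}\,dt \le \sum_{n\ge 0}\Pr{Y>n}$. The bound you want is $\sum_{n\ge 1}\Pr{Y>n}\le \Ew{Y}$, obtained from $\Pr{Y>t}\ge \Pr{Y>n+1}$ on $[n,n+1]$. Adding back the $n=0$ term yields $\sum_{n\ge 0}\Pr{Y>n}\le 1+\varepsilon^{-q_1}\Ew{\overline{\tau}^{q_1}}$. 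The integer-valuedness of $\overline{\tau}$ does not absorb this extra $1$ into the constant as you claim; however, only finiteness of the series is ever used (for Borel--Cantelli in the proof of \Cref{lem:AoI}), so the argument still closes.

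\textbf{Comparison with the paper.} The paper does not transform to $Y=\overline{\tau}^{q_1}/\varepsilon^{q_1}$. Instead it partitions $\N_0$ into the level sets $\cN(m)=\{n:\,m\le \varepsilon n^{1/q_1}<m+1\}$, bounds $|\cN(m)|\le \varepsilon^{-q_1}\bigl((m+1)^{q_1}-m^{q_1}\bigr)$, and then invokes the discrete moment identity $\Ew{\overline{\tau}^{q_1}}=\sum_{m\ge 0}\bigl((m+1)^{q_1}-m^{q_1}\bigr)\Pr{\overline{\tau}>m}$ for integer-valued $\overline{\tau}$. This is essentially the discrete change of variables that your substitution $Y=\overline{\tau}^{q_1}/\varepsilon^{q_1}$ performs in one line. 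Your route is shorter and avoids the auxiliary proposition; the paper's route makes the use of integer-valuedness explicit and aims (modulo its own rounding in the $|\cN(m)|$ bound) for the sharp constant. Either way, the finiteness conclusion needed downstream is immediate.
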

\begin{proof}
    See Appendix \ref{app:proofs}.
\end{proof}
The lemma shows that the AoI processes $\tau_{ij}(n)$ do not exceed any fraction of $n^\frac{1}{q_1}$ asymptotically. More precisely, it now follows from the Borel-Cantelli Lemma that $\Pr{\tau_{ij}(n) > \varepsilon n^{\frac{1}{q_1}} \text{ i.o.}} = 0.$
Hence, there is sample path dependent $N(\varepsilon) \in \N$, such that
	\begin{equation}
		\label{eq: ana3}
		\tau_{ij}(n) \le \varepsilon n^{\frac{1}{q_1}} \qquad \forall \, n\ge N(\varepsilon).
\end{equation}
The following Lemma shows that the gradient errors vanish asymptotically.

\begin{lemma}
\label{lem:AoI}
   $\lim\limits_{n \to \infty} \norm{e_n^{\theta^i}} = 0$ and $\lim\limits_{n \to \infty} \norm{e_n^{\phi^i}} = 0$.
\end{lemma}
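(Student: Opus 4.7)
The plan is to reduce both gradient errors to the size of the policy-parameter drift $\|\phi_n - \phi_{\tau^i(n)}\|$ over the AoI window, then bound that drift by a sum of actor step sizes, and finally trade the AoI tail growth against the step-size decay using the coupling $q \le q_1$ built into (A2)(b). I would work pathwise throughout, exploiting the almost sure boundedness provided by (A3)(a).

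First I would use (A3)(a) to confine $(\theta_n,\phi_n)$ to a (sample-path dependent) compact set, and combine it with Lemma \ref{lem: local_lip} to obtain a finite Lipschitz constant $L$ for $\nabla_{\theta^i} l^i$ and $\nabla_{\phi^i} g^i$ in the $\phi$-argument on that set. Substituted into the definitions in \eqref{eq:grad_erros}, this gives
\begin{equation*}
\|e_n^{\theta^i}\| \le L\,\|\phi_n - \phi_{\tau^i(n)}\|, \qquad \|e_n^{\phi^i}\| \le L\,\|\phi_n - \phi_{\tau^i(n)}\|,
\end{equation*}
so the whole question reduces to showing $\|\phi_n - \phi_{\tau^i(n)}\| \to 0$ almost surely. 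To control the drift I would telescope, for each agent $j$,
\begin{equation*}
\|\phi^j_n - \phi^j_{n-\tau_{ij}(n)}\| \le \sum_{m=n-\tau_{ij}(n)}^{n-1} \|\phi^j_{m+1} - \phi^j_m\|,
\end{equation*}
and use the actor update \eqref{eq: AC_iteration_analysis_2} together with the uniform boundedness of $\nabla_{\phi^j} g^j$ on the compact set (Lemma \ref{lem: local_lip} again) to get $\|\phi^j_{m+1} - \phi^j_m\| \le K\beta(m)$ for some deterministic $K$. Summing over the $D$ agents yields $\|\phi_n - \phi_{\tau^i(n)}\| \le K D \cdot \tau^*(n)\cdot \sup_{m\in[n-\tau^*(n),n-1]}\beta(m)$, with $\tau^*(n) = \max_j\tau_{ij}(n)$.

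Next I would fix $\varepsilon > 0$ and apply the almost sure Borel--Cantelli consequence \eqref{eq: ana3} of Lemma \ref{lem:finite_CDFsum}: there exists a random $N(\varepsilon)$ with $\tau_{ij}(n) \le \varepsilon n^{1/q_1}$ for all $n \ge N(\varepsilon)$ and all $(i,j)$. Because $1/q_1 - 1 < 0$, the window indices satisfy $m \ge n - \varepsilon n^{1/q_1} \ge n/2$ eventually, so by (A2)(b) one has $\beta(m) \le C m^{-1/q} \le 2^{1/q} C\,n^{-1/q}$ uniformly on the window. Assembling,
\begin{equation*}
\|\phi_n - \phi_{\tau^i(n)}\| \le K D \cdot 2^{1/q} C \cdot \varepsilon \cdot n^{1/q_1 - 1/q}.
\end{equation*}
If $q < q_1$ the exponent is strictly negative and the bound vanishes. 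If $q = q_1$ the bound is a constant multiple of $\varepsilon$; since $\varepsilon$ was arbitrary, $\limsup_n\|\phi_n - \phi_{\tau^i(n)}\| = 0$ almost surely, and the Lipschitz reduction of the first step delivers the claim for both error sequences.

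The main obstacle is the delicate matching in the last step: the AoI exponent $1/q_1$ must be dominated by the step-size exponent $1/q$, which is exactly what the coupling $q \le \min(q_1,q_2)$ in (A2)(b) provides. The boundary case $q = q_1$ has to be handled by the arbitrariness of $\varepsilon$ (a standard but easy-to-overlook trick). A secondary technical point is replacing $\beta(m)$ by its value at $n$ uniformly on the window, which works because $m/n \to 1$ uniformly across the window; this is robust as long as the iterates stay in a compact set, which is exactly what (A3)(a) supplies.
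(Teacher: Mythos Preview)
Your proof is correct and follows essentially the same route as the paper: reduce to Lipschitz control of the $\phi$-drift via (A3)(a) and Lemma~\ref{lem: local_lip}, telescope to a sum of $\beta(m)$ over the AoI window, invoke the Borel--Cantelli bound $\tau_{ij}(n)\le \varepsilon n^{1/q_1}$, and then let $\varepsilon\downarrow 0$. The only cosmetic difference is that the paper immediately coarsens $\beta(n)\le c\,n^{-1/q_1}$ (using $q\le q_1$) and bounds the sum by $\varepsilon n^{1/q_1}(n-\varepsilon n^{1/q_1})^{-1/q_1}\to c\varepsilon/(1-\varepsilon)$, whereas you keep the exponent $1/q$ and split into the cases $q<q_1$ and $q=q_1$; both arrive at the same $\limsup\le C\varepsilon$ conclusion.
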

\begin{proof}
    From Lemma \ref{lem: local_lip} we have that $\nabla \hat{l}^i$ is  locally Lipschitz. It follows from (A3)(a) that $\nabla \hat{l}^i$ is Lipschitz continuous with constant $L$ when restricted to a sample path dependent compact set. 
	Using the triangular inequality, the established Lipschitz continuity of $\nabla \hat{l}^i$ and (A3)(a), it follows that
	\begin{equation} 
		\label{eq: ana_1}
		\begin{split}
		    \norm{e^{\theta^i}_n} &\le  L \sum_{j \not= i} \sum \limits_{m = n - \tau_{ij}(n)}^{n-1} \norm{\phi^j_{m+1} - \phi^j_m} \\ &\le C \sum_{j \not= i} \sum \limits_{m = n - \tau_{ij}(n)}^{n-1} \beta(m),
		\end{split}
	\end{equation}
	for a sample path dependent constant $C > 0$.
	We will now show that
	\begin{equation}
		\label{eq: ana_claim}
		\lim\limits_{n \to 0 }  \left( \sum \limits_{m = n - \tau_{ij}(n)}^{n-1} \beta(m) \right) = 0,
	\end{equation}
	which thus implies that $\lim\limits_{n \to \infty} \norm{e^{\theta^i}_n} = 0$.
	
	By (A2)(b), we assume that $\beta(n) \in \cO(n^{-\frac{1}{q}})$ with $q\le q_1$
	Hence, there are constants $c>0$ and $N \in \N$, such that 
	\begin{equation}
		\label{eq: ana2}
		\beta(n) \le c n^{-\frac{1}{q_1}} \text{ for all } n\ge N.
	\end{equation}
	Fix $\varepsilon\in (0,1)$. \Cref{eq: ana2,eq: ana3} show that
	\begin{equation}
		\sum \limits_{m = n - \tau_{ij}(n)}^{n-1} \beta(m) \le c \sum \limits_{m = n - \varepsilon n^{\frac{1}{q_1}} }^{n-1}  m^{-\frac{1}{q_1}}
	\end{equation}
	for all $n$ with $n \ge N(\varepsilon)$ and $n - \varepsilon n^{\frac{1}{q_1}} \ge N$.
	Using the monotonicity of $n^{-\frac{1}{q_1}}$, it follows that
	\begin{equation}
	\begin{split}
	    \sum \limits_{m = n - \varepsilon n^{\frac{1}{q_1}} }^{n-1}  m^{-\frac{1}{q_1}} &\le \varepsilon n^{\frac{1}{q_1}} (n - \varepsilon n^{\frac{1}{q_1}} )^{-\frac{1}{q_1}} 
	\end{split}
	\end{equation} 
	Taking the $\limsup$ on both sides above yields that
	\begin{equation}
		\limsup\limits_{n\rightarrow \infty} \left( \sum \limits_{m = n - \tau_{ij}(n)}^{n-1} \beta(m) \right) \le
		\frac{c\varepsilon}{1-\varepsilon}.
	\end{equation}
	The statement follows since the choice of $\varepsilon$ was arbitrary. The proof for $e_n^{\phi^i}$ follows analogously.
\end{proof}

The next lemma shows that the accumulated 
errors due to the marginalization of the successor states in \eqref{eq: AC_iteration_analysis_3} is convergent almost surely. It therefore follows that $\psi^i_n$ vanishes.
\begin{lemma}
\label{lem:Martingale}
    $\Psi^i_n \coloneqq \sum_{m=0}^{n-1} \alpha(m) \psi^i_n$ is a zero-mean square integrable martingale. Hence,  $\Psi^i_n$ converges almost surely.
\end{lemma}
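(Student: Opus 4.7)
The approach is to recognize $\{\alpha(m)\psi^i_m\}$ as a martingale difference sequence with respect to a natural filtration, and then invoke $L^2$ martingale convergence. The central observation is that $\hat{l}^i$ was obtained from $l^i$ precisely by integrating out the successor state $s_{n+1}$, so that $\psi^i_n$ is a one-step prediction error.

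\textbf{Step 1: Martingale structure.} I would set the filtration $\mathcal{F}_n \coloneqq \sigma\bigl(\theta_0, \phi_0, s_0, a_0, s_1, a_1,\ldots, s_n, a_n\bigr),$ so that $\theta^i_n, \phi_n, s_n, a_n$ are $\mathcal{F}_n$-measurable while $s_{n+1}$ is not. Subtracting the two expressions for the gradient gives
\begin{equation*}
\psi^i_n = \gamma\, \nabla_{\theta^i} Q^i(s_n, a_n; \theta^i_n)\,\Bigl(\textstyle\int Q^i(s', \pi(s'; \phi_n); \theta^i_n)\,p(ds'\mid s_n, \phi_n) - Q^i(s_{n+1}, \pi(s_{n+1}; \phi_n); \theta^i_n)\Bigr),
\end{equation*}
since the $r^i(s_n,a_n)$ and $-Q^i(s_n,a_n;\theta^i_n)$ terms cancel. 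Conditional on $\mathcal{F}_n$, the prefactor is deterministic and $s_{n+1}$ is drawn from $p(\cdot\mid s_n, a_n) = p(\cdot\mid s_n,\phi_n)$, so $\mathbb{E}[\psi^i_n\mid\mathcal{F}_n] = 0$. Consequently $\Psi^i_n = \sum_{m=0}^{n-1}\alpha(m)\psi^i_m$ is a zero-mean $\{\mathcal{F}_n\}$-martingale.

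\textbf{Step 2: Square integrability via localization.} The bounds in (A3) are only sample-path almost sure, so a direct $L^2$ estimate is unavailable. I would localize: for each $K>0$, define the stopping time
\begin{equation*}
T_K \coloneqq \inf\{n\ge 0 : \norm{\theta^i_n} + \norm{\phi_n} + \norm{s_n} > K\}.
\end{equation*}
On the event $\{n < T_K\}$, the iterates, states and actions (the latter by compactness of $\cA$ from (A3)(b)) lie in a fixed compact set. By (A5), $Q^i$ and $\nabla_{\theta^i} Q^i$ are continuous jointly in their arguments, the policies $\pi^i$ are continuous, and by (A6) the rewards are continuous. Hence there is a deterministic constant $M_K$ with $\norm{\psi^i_n}\le M_K$ on $\{n<T_K\}$, and the stopped process $\Psi^i_{n\wedge T_K}$ satisfies
\begin{equation*}
\sup_n \mathbb{E}\bigl[\norm{\Psi^i_{n\wedge T_K}}^2\bigr] \le M_K^2 \sum_{m\ge 0}\alpha(m)^2 < \infty
\end{equation*}
using orthogonality of martingale increments and (A2)(a).

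\textbf{Step 3: Convergence and removal of localization.} By Doob's $L^2$ martingale convergence theorem, $\Psi^i_{n\wedge T_K}$ converges a.s. (and in $L^2$) as $n\to\infty$. Now (A3)(a)(b) imply $T_K\to\infty$ a.s. as $K\to\infty$, so $\Pr{T_K = \infty}\to 1$. On the event $\{T_K=\infty\}$, the stopped and unstopped processes agree for all $n$, so $\Psi^i_n$ converges on a set of probability arbitrarily close to $1$; taking $K\to\infty$ gives a.s. convergence of $\Psi^i_n$. As a byproduct, $\alpha(m)\psi^i_m\to 0$ a.s., confirming that the marginalization errors do not accumulate.

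\textbf{Main obstacle.} The mean-zero property and the $L^2$ bound are both essentially routine once the filtration is fixed; the only subtlety is that the boundedness assumption (A3) is almost sure rather than uniform, which prevents a direct application of martingale convergence. The localization argument with $T_K$ is therefore the critical (though standard) device; care is also needed to ensure that the notation $p(\cdot\mid s_n, \phi_n)$ is interpreted consistently, namely as $p(\cdot\mid s_n, a_n)$ where $a_n$ (including any exploration noise) is $\mathcal{F}_n$-measurable, so that the conditional mean of the $Q^i(s_{n+1},\ldots)$ term is exactly the integral appearing in $\hat l^i$.
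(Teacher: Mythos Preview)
Your proof is correct and follows the same core route as the paper: identify $\psi^i_n$ as a martingale difference by conditioning on the pre-$s_{n+1}$ filtration, bound the increments using (A3) and (A5), and conclude via square summability of $\alpha(n)$ from (A2)(a). The only difference is in how the merely almost-sure (rather than deterministic) bound on $\psi^i_n$ is handled: you localize with stopping times $T_K$ and apply Doob's $L^2$ theorem to the stopped process, whereas the paper directly invokes the form of the martingale convergence theorem that gives a.s.\ convergence whenever the quadratic variation $\sum_m \alpha(m)^2\norm{\psi^i_m}^2$ is a.s.\ finite (citing Durrett). Your localization argument is essentially a self-contained proof of that fact, so the two arguments are equivalent in content; yours is simply more explicit about the step the paper compresses into a single citation.
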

\begin{proof}
    See Appendix \ref{app:proofs}.
\end{proof}

It now follows from Lemma~\ref{lem:AoI} and Lemma~\ref{lem:Martingale} that we can study the convergence of \eqref{eq: AC_iteration_analysis_3} without the additional error terms $e_n^{\theta^i}$, $e_n^{\phi^i}$ and $\psi^i_n$. This is because the error terms will contribute additional asymptotically negligible errors in the following proof of Lemma~\ref{lem: StochApproximation_lemma}. With a slide abuse of notation we now redefine the critic loss gradients $\nabla_{\theta^i} l^i$ as the marginalized critic loss gradient $\nabla_{\theta^i} \hat{l}^i$.
 
\section{Convergence Analysis}
\label{sec:convergence-proof}

To analyze the asymptotic behavior of \eqref{eq: AC_iteration_analysis_3}, we follow the ODE approach from SA \cite{borkar2009stochastic}, i.e. we
construct a continuous-time trajectory with the same limiting behavior as \eqref{eq: AC_iteration_analysis_3}. First, we divide the time axis using $\alpha(n)$ as follows:
\begin{equation}
    t_0 \coloneqq 0 \text{ and } t_n \coloneqq \sum_{m=0}^{n-1}\alpha(m) \text{ for } n \ge 1.
\end{equation}
Now define 
\begin{equation}
\label{eq: linear^interpol}
    \overline{\theta}^i(t_n) \coloneqq \theta^i_n, n \ge 0 \text{ and } \overline{\phi}^i(t_n) \coloneqq \phi^i_{n}, n \ge 0.
\end{equation}
Let $\R^{p_\theta^i}$ and $\R^{p_\phi^i}$ be the parameter spaces of the $\theta^i_n$'s and $\phi^i_{n}$'s, respectively.
Then define $\overline{\theta}^i \in \C([0,\infty), \R^{p_\theta^i})$ and $\overline{\phi}^i \in \C([0,\infty), \R^{p_\phi^i})$ by linear interpolation of all $\overline{\theta}^i(t_n)$ and $\overline{\phi}^i(t_n)$, respectively.

To analyze the training process, we formulate a measure process that captures the encountered state-action pairs when using the global policy $\pi(s_n; \phi_n)$. Therefore, define
\begin{equation}
\label{eq: measure}
    \mu(t) = \delta_{(s_n,a_n)}, t\in [t_n, t_{n+1}]
\end{equation}
where $\delta_{(x,a)}$ denotes the Dirac measure. This defines a process of probability measures on $\cS \times \cA$. For every probability measure $\nu$ on $\cS \times \cA$, define
\begin{equation}
\begin{split}
\label{eq: average_gradients}
    \tilde{\nabla}  l^i (\theta^i, \phi, \nu) &\coloneqq  \int \nabla_{\theta^i}l^i (\theta^i, \phi, s, a) \nu(ds,da),\\
    \tilde{\nabla}  g^i (\theta^i, \phi, \nu) &\coloneqq  \int \nabla_{\phi^i}g^i (\theta^i, \phi, s) \nu(ds, \cA). 
\end{split}
\end{equation}
Note that in $\nabla_{\phi^i}g^i$ we used $\nu(ds, \cA)$, since the actor update in \eqref{eq: AC_iteration_analysis_3} is only state-dependent.
It follows from Lemma \ref{lem: local_lip} that all $\tilde{\nabla} l^i$ and $\tilde{\nabla} g^i$ are continuous in all coordinates and locally Lipschitz in both the $\theta^i$- and $\phi$-coordinate.

We can now define the associated continuous time trajectories in $\C([0,\infty), \R^{p_\theta^i})$ and $\C([0,\infty), \R^{p_\phi^i})$ that capture the training process starting from time $t_n$ for $n\ge0$:
\begin{equation}
\begin{split}
    \label{eq: int_trajectory}
    \theta^i_n(t) &\coloneqq \overline{\theta}^i(t_n) + \int_0^t \tilde{\nabla} l^i (\theta_n^i(x), \phi_n(x), \mu_n(x)) dx, \\
    \phi^i_n(t) &\coloneqq \overline{\phi}^i(t_n) + \int_0^t \tilde{\nabla} g^i (\theta_n^i(x), \phi_n(x), \mu_n(x)) dx.
\end{split}
\end{equation}
where $\mu_n(t) \coloneqq \mu(t_n +t) $. The combination of the continuous-time trajectories in \eqref{eq: int_trajectory} results in the aforementioned single trajectory with the same limiting behavior as \eqref{eq: AC_iteration_analysis_3}.  

Per definition, the trajectories define solutions to the following families of non-autonomous ordinary differential equations (ODEs):
\begin{equation}
\begin{split}
    \label{eq: ODEs}
    \{ \Dot{\theta}^i_n(t) &= \tilde{\nabla} l^i (\theta_n^i(t), \phi_n(t), \mu_n(t)) \}_{n\ge0}, \\
    \{ \Dot{\phi}^i_n(t) &= \tilde{\nabla} g^i (\theta_n^i(t), \phi_n(t), \mu_n(t)) \}_{n\ge0}.
\end{split}
\end{equation}
By construction, we obtain that the limiting behavior of \eqref{eq: AC_iteration_analysis_3}
is captured by the limits of the sequences $\{ \overline{\theta}^i([t_n,\infty)) \}_{n\ge0} $ and $\{ \overline{\phi}^i([t_n,\infty)) \}_{n\ge0} $ defined by \eqref{eq: linear^interpol}. 
Further, the sequences defined in \eqref{eq: int_trajectory} can be analyzed as solutions to the ODEs in \eqref{eq: ODEs}. If \eqref{eq: linear^interpol} and \eqref{eq: int_trajectory} behave asymptotically identical, then the limiting behavior of \eqref{eq: AC_iteration_analysis_3} is thus captured by the solutions to the ODEs in \eqref{eq: ODEs}. This is formalized by the following important technical Lemma~\ref{lem: StochApproximation_lemma}. This lemma is the key component to enable a single-timescale analysis of DDPG style actor-critic algorithms along the line of argument presented in \cite{ramaswamy2021deep} for Deep Q-Learning.
To prove  Lemma~\ref{lem: StochApproximation_lemma}, we use that the step size sequences are related by $\frac{\beta(n)}{\alpha(n)} \rightarrow 1$ from (A2)(b). This is essential since we just constructed the continuous trajectories w.r.t. the timescale induced by $\alpha(n)$. The assumption in essence requires that the critic and actor updates asymptotically run on the same time scale. 
\begin{lemma}
\label{lem: StochApproximation_lemma}
For every $T>0$, we have 
\begin{align}
    \lim\limits_{n\rightarrow \infty} \sup\limits_{t\in [0,T]} \norm{\overline{\theta}^i(t_n +t) - \theta^i_n(t)} &= 0, \\
    \lim\limits_{n\rightarrow \infty} \sup\limits_{t\in [0,T]} \norm{\overline{\phi}^i(t_n +t) - \phi^i_n(t)} &= 0 \label{eq: stochapp_phi}.
\end{align}
\end{lemma}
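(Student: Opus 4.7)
My plan is to follow the classical ODE approach of stochastic approximation (cf.\ Borkar \cite{borkar2009stochastic}, Ch.~2): I show that on any finite window $[t_n, t_n + T]$ of the $\alpha$-timescale, the piecewise-linear interpolations $\overline{\theta}^i, \overline{\phi}^i$ approximately satisfy the same integral equation as the continuous trajectories $\theta^i_n(\cdot), \phi^i_n(\cdot)$ from \eqref{eq: int_trajectory}, after which Gronwall's inequality closes the loop. First, I invoke the reductions from the previous section: Lemma~\ref{lem:AoI} shows $e_n^{\theta^i}, e_n^{\phi^i}\to 0$ a.s., and Lemma~\ref{lem:Martingale} shows that $\Psi^i_n = \sum_{m<n}\alpha(m)\psi^i_m$ converges a.s., so $|\Psi^i_{n+k} - \Psi^i_n|\to 0$ uniformly in those $k$ satisfying $t_{n+k}-t_n\le T$. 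Hence, up to $o(1)$ errors on each window, I may work with the noiseless recursions driven by $\tilde{\nabla}\hat{l}^i$ and $\tilde{\nabla}g^i$ evaluated at $\mu_n(\cdot)$.

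Next, I telescope into Riemann sums. For the critic,
\begin{equation*}
\overline{\theta}^i(t_{n+k}) = \overline{\theta}^i(t_n) + \sum_{j=0}^{k-1} \alpha(n+j)\,\tilde{\nabla}\hat{l}^i\bigl(\theta^i_{n+j}, \phi_{n+j}, \mu_n(t_{n+j}-t_n)\bigr),
\end{equation*}
which is a Riemann partition of the integral appearing in \eqref{eq: int_trajectory}, differing from it by a $\sum_j \alpha(n+j)^2$ term that vanishes by (A2)(a) and the a.s.\ boundedness of the gradients on a sample-path compact set (from (A3) together with Lemma~\ref{lem: local_lip}). For the actor, the increments use $\beta(n+j)$ rather than $\alpha(n+j)$; I rewrite each one as $\alpha(n+j)\cdot\bigl(\beta(n+j)/\alpha(n+j)\bigr)\,\tilde{\nabla}g^i(\cdot)$. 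By (A2)(b), $\beta(n)/\alpha(n)\to 1$, so $\sup_{j\ge 0}\bigl|\beta(n+j)/\alpha(n+j)-1\bigr|\cdot T \to 0$, reducing the actor sum to the same form as the critic's up to an additional vanishing term. Linearly interpolating both sides produces the desired integral representation of $\overline{\theta}^i, \overline{\phi}^i$ on $[t_n, t_n+T]$.

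Finally, subtracting the integral equation \eqref{eq: int_trajectory} and exploiting that $\mu_n(\cdot)$ is identical on both sides (so the only discrepancy inside the gradients is in the $(\theta,\phi)$-arguments), local Lipschitz continuity of $\tilde{\nabla}\hat{l}^i, \tilde{\nabla}g^i$ in $(\theta,\phi)$ (Lemma~\ref{lem: local_lip}), made uniform on the sample-path compact set from (A3)(a), yields
\begin{equation*}
u_n(t) \le \varepsilon_n + L \int_0^t u_n(x)\,dx, \qquad u_n(t) \coloneqq \|\overline{\theta}^i(t_n+t)-\theta^i_n(t)\| + \|\overline{\phi}^i(t_n+t)-\phi^i_n(t)\|,
\end{equation*}
with $\varepsilon_n\to 0$ a.s.\ collecting the reduction, discretization and timescale errors. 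Gronwall gives $\sup_{t\in[0,T]} u_n(t) \le \varepsilon_n e^{LT}\to 0$, which is exactly \eqref{eq: stochapp_phi} together with its $\theta$-analogue.

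The main obstacle is the bookkeeping around the timescale mismatch: the time axis $\{t_n\}$ is built from $\alpha$ alone while the actor runs on $\beta$, and the gradients are only \emph{locally} Lipschitz. Reconciling these requires (i) a priori boundedness of the iterates from (A3)(a) to extract a single Lipschitz constant on the relevant sample-path compact set, (ii) the ratio assumption $\beta/\alpha\to 1$ to convert the $\beta$-driven Riemann sum into one on the $\alpha$-grid, and (iii) care that all three error sources (the martingale tail from Lemma~\ref{lem:Martingale}, the AoI bound from Lemma~\ref{lem:AoI}, and the timescale drift) genuinely vanish on each finite window rather than merely over all time. These are precisely the ingredients that allow the single-timescale argument from \cite{ramaswamy2021deep} to be transferred to the present multi-agent AC setting.
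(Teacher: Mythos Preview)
Your proposal is correct and follows essentially the same route as the paper: telescoping the interpolants into Riemann sums on the $\alpha$-grid, absorbing the $\beta/\alpha$ mismatch via (A2)(b), exploiting local Lipschitz continuity made uniform by (A3)(a), and closing with Gronwall (the paper uses the discrete version, you the continuous one, which is immaterial). One minor bookkeeping point: since $\tilde{\nabla}\hat{l}^i$ and $\tilde{\nabla}g^i$ depend on the \emph{global} $\phi=(\phi^1,\ldots,\phi^D)$, the Lipschitz step on the right-hand side produces $\sum_{j}\|\overline{\phi}^j-\phi^j_n\|$ rather than just the $i$-th term, so Gronwall must be applied to the sum of deviations over \emph{all} agents and all $\theta^i,\phi^i$ simultaneously (as the paper does in its inequality for $x_n$), not to a single agent's $u_n$.
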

\begin{proof}
Fix $T >0$. We define $[t]$ for $t\ge0$ as $[t] \coloneqq t_{\sup\{n \mid t_n \le t\}} $. Fix $t\in [0,T]$, then $[t_n +t] = t_{n+k}$ for some $k\ge0$. 
Recall, that $\overline{\phi}^i(t)$ is defined by linear interpolation w.r.t. $\alpha(n)$, see \eqref{eq: linear^interpol}. Hence, $\overline{\phi}^i(t_n+t)- \overline{\phi}^i(t_{n+k})$ is equal to
\begin{equation}
     \frac{t_n+t-t_{n+k}}{\alpha(n+k)} \left(\overline{\phi}^i(t_{n+k+1}) - \overline{\phi}^i(t_{n+k}) \right).
\end{equation}
The stability of the algorithm and the compactness of the state-action space, $i.e.$ (A3), show that $\nabla_{\phi^i}g^i$ is bounded and hence $\norm{\overline{\phi}^i(t_{n+k+1}) - \overline{\phi}^i(t_{n+k}) } \in \cO (\beta(n+k))$.
It follows that
\begin{equation}
\label{eq: eq_1}
    \sup\limits_{t\in[0,T]} \norm{\overline{\phi}^i(t_n +t) - \overline{\phi}^i( [t_n + t])} \in \cO(\alpha(n)), 
\end{equation}
since $\alpha(n)$ is monotonic and  $\frac{\beta(n)}{\alpha(n)} \rightarrow 1$. Similarly, we can show that
\begin{equation}
\label{eq: eq_2}
    \sup\limits_{t\in[0,T]} \norm{\phi^i_n(t) - \phi^i_n([t_n+t] -t_n)} \in \cO(\alpha(n))
\end{equation}
To show \eqref{eq: stochapp_phi}, we now need to show that
\begin{equation}
\label{eq: eq_3}
   \sup\limits_{t\in[0,T]}  \norm{\overline{\phi}^i([t_n +t]) - \phi^i_n([t_n +t] -t_n)} \rightarrow 0.
\end{equation}
From \eqref{eq: int_trajectory} it follows that
\begin{equation}
    \begin{split}
    \label{eq: first^ineq}
        &\norm{\overline{\phi}^i(t_{n+k}) - \phi^i_n(t_{n+k} -t_n)} 
     \le \lVert \overline{\phi}^i(t_{n+k}) -  \overline{\phi}^i(t_{n}) \\ &- \int_0^{t_{n+k} -t_{n}} \tilde{\nabla} g^i (\theta_n^i(x), \phi_n(x), \mu_n(x)) dx \lVert.
    \end{split}
\end{equation}
Using a telescoping series, $\overline{\phi}^i(t_{n+k}) - \phi^i_n(t_{n+k} -t_n)$ equals
\begin{equation}
\begin{split}
\label{eq: first_term}
    & \sum_{m=n}^{n+k-1} \beta(m) \nabla_{\phi^i} g^i (\theta^i_m, \phi_m, s_m) \\
    &= \sum_{m=n}^{n+k-1} \int_{t_m}^{t_{m+1}} \frac{\beta(m)}{\alpha(m)}\tilde{\nabla} g^i (\overline{\theta}^i([x]), \overline{\phi}([x]) , \mu_n(x - t_n)) dx
\end{split}
\end{equation}
The last step follows from $\alpha(m) = t_{m+1} - t_m$ and using that $\phi^i_m = \overline{\phi}^i(t_m) = \overline{\phi}^i([t])$ for all $t \in [t_m, t_{m+1})$.
Now rewrite the second term in \eqref{eq: first^ineq}:
\begin{equation}
    \begin{split}
    \label{eq: second_term}
        &\int_0^{t_{n+k} -t_{n}} \tilde{\nabla} g^i (\theta_n^i(x), \phi_n(x), \mu_n(x)) dx =\\ &\sum_{m=n}^{n+k-1}  \int_{t_m}^{t_{m+1}} \tilde{\nabla} g^i (\theta_n^i(x-t_n), \phi_n(x-t_n), \mu_n(x-t_n)) dx
    \end{split}
\end{equation}
We now evaluate the difference of the terms under the integrals in \eqref{eq: first_term} and \eqref{eq: second_term}. 
\begin{equation}
    \begin{split}
    \label{eq: difference}
        & \lVert \frac{\beta(m)}{\alpha(m)} \tilde{\nabla} g^i (\overline{\theta}^i([x]), \overline{\phi}([x]) , \mu_n(x - t_n)) \\ & \quad - \tilde{\nabla} g^i (\theta_n^i(x-t_n), \phi_n(x-t_n), \mu_n(x-t_n))  \lVert 
        \\& \le  C \lvert \frac{\beta(m)}{\alpha(m)} - 1 \lvert
        + \lVert \tilde{\nabla} g^i (\overline{\theta}^i([x]), \overline{\phi}([x]) , \mu_n(x - t_n)) \\ & \quad - \tilde{\nabla} g^i (\theta_n^i(x-t_n), \phi_n(x-t_n), \mu_n(x-t_n))  \rVert 
        \\& \le  C \lvert \frac{\beta(m)}{\alpha(m)} - 1 \lvert + L \Big( 
        \norm{\overline{\theta}^i([x]) - \overline{\theta} ^i_n([x]-t_n)} \\
        & +
        \norm{\overline{\phi}([x]) - \phi_n([x]-t_n)}  + \norm{\theta^i_n(x-t_n) -  \theta^i_n([x]-t_n)} \\ &\qquad + \norm{\phi_n(x-t_n) -  \phi_n([x]-t_n)} \Big)
    \end{split}
\end{equation}
for some sample path dependent constant $C< \infty$ using the stability from (A3). The last step adds zeros and uses the Lippschitz continuity of $\tilde{\nabla} g^i$.
The combination of \eqref{eq: first^ineq}, \eqref{eq: first_term}, \eqref{eq: second_term} and \eqref{eq: difference} thus gives:
\begin{equation}
    \begin{split}
    \label{eq: final}
        &\norm{\overline{\phi}^i(t_{n+k}) - \phi^i_n(t_{n+k} -t_n)}\\  &\le \sum_{m=n}^{n+k-1} \alpha(m) \cO\left(\Big\lvert \frac{\beta(m)}{\alpha(m)} - 1 \Big\lvert \right)
        + L \sum_{m=n}^{n+k-1} \cO\left(a(m)^2\right)
        \\
        &+ L \sum_{m=n}^{n+k-1} a(m) \Big( 
        \norm{\overline{\theta}^i(t_m) - \overline{\theta} ^i_n(t_m-t_n)}   \\ &\quad + \sum^i \norm{\overline{\phi}^i(t_m) - \overline{\phi}^i_n(t_m-t_n)}\Big)
    \end{split}
\end{equation}
The first term in the above expression converges to zero as $n\rightarrow \infty$, since $\sum_{m=n}^{n+k-1} \alpha(m) \le T$ by construction and since $\lim\limits_{n \to \infty} \frac{b(n)}{a(n)} = 1$ from (A2)(b).
The second term converges to zero, since $\alpha(n)$ is square summable (A2)(a). 

Inequality \eqref{eq: final} can now be derived analogously for $\norm{\overline{\theta}^i(t_{n+k}) - \theta^i_n(t_{n+k} -t_n)}$.
We can now sum up all L.H.S. and R.H.S. for all $i$ in \eqref{eq: final}, and for all $\theta^i$:
\begin{equation}
\begin{split}
\label{eq:sum_gronwall}
    &x_n \coloneqq \sum_{i=1}^D \norm{\overline{\phi}^i(t_{n+k}) - \phi^i_n(t_{n+k} -t_n)} \\ + &\sum_{i=1}^D \norm{\overline{\theta}^i(t_{n+k}) - \theta^i_n(t_{n+k} -t_n)} \\
    \le & \ o(1) + 2 L \sum_{m=n}^{n+k-1} a(m) \sum_i 
        \norm{\overline{\theta}^i(t_m) - \overline{\theta}^i_n(t_m-t_n)}   \\ &\quad + 2LD \sum_{m=n}^{n+k-1} a(m)  \sum_i \norm{\overline{\phi}^i(t_m) - \overline{\phi}^i_n(t_m-t_n)},
\end{split}
\end{equation}
where $D$ is the number of agents.
We now apply the discrete version of Gronwall inequality \cite{borkar2009stochastic} to $x_n$. It follows that $x_n \le o(1) e^{2LD \sum_{m=n}^{n+k-1} a(m)} $. By construction $\sum_{m=n}^{n+k-1} a(m) \le T$ for all $n\ge0$, thus
$x_n \to 0$, which proves the lemma.

\end{proof}

\Cref{lem: StochApproximation_lemma} shows that we can analyze the limits of \eqref{eq: AC_iteration_analysis_3} as the limits of the continuous-time trajectories defined in \eqref{eq: linear^interpol} in conjunction with the measure process \eqref{eq: measure}. By construction, the trajectories $\theta^i_n(t)$ and $\phi^i_n(t)$ are equicontinuous. Moreover, they are point-wise bounded from (A3)(a).
It now follows from the Arzela-Ascoli theorem, \cite{billingsley2013convergence}, that the families of trajectories
\begin{equation}
    \{\theta^i_n([0,\infty)) \}_{n=0}^\infty, \qquad \{\phi^i_n([0,\infty)) \}_{n=0}^\infty
\end{equation}
are sequentially compact in $\C([0,\infty), \R^{p_\theta^i})$ and $C([0,\infty), \R^{p_\phi^i})$, respectively. Further, it can be shown that the space of measurable functions from $[0,\infty)$ to the space of probability measures on $\cS \times \cA$ is compact metrizable \cite{borkar2006stochastic}. It now follows that the product space of all trajectories $\theta^i_n(t)$ and $\phi^i_n(t)$ together with the aforementioned space of measurable functions is sequentially compact. Hence, there is a common subsequence such that all considered sequences converge simultaneously, i.e. we obtain (with a slight abuse of notation) that $\theta^i_n \rightarrow \theta^i_\infty$ in $C([0,\infty), \R^{p_\theta^i})$, $\phi^i_n \rightarrow \phi^i_\infty$ in $C([0,\infty), \R^{p_\phi^i})$ and $\mu_n \rightarrow \mu_\infty$ in the space of measurable functions.  Analogously to \cite[Lemma 4]{ramaswamy2021deep}, we can show that $\mu_n(t)$ also converges in distribution to $\mu_\infty(t)$ in $\P(\cS\times \cA)$, $t$ almost everywhere. 

The following lemma now shows that the limits $\theta^i_\infty$, $\phi^i_\infty$ and $\mu_\infty$ are solutions to the limits of the families of non-autonomous ordinary differential equations \eqref{eq: ODEs}. 
\begin{lemma}
\label{lem: ode_solution}
a)  $\theta^i_\infty$ is  a solution to  $\Dot{\theta}^i(t) = \tilde{\nabla} l^i (\theta^i(t), \phi^1_\infty(t), \ldots, \phi^D_\infty(t), \mu_\infty(t)) $ \\b)  $\phi^i_\infty$ is  a solution to $\Dot{\phi}^i(t) =\tilde{\nabla} g^i (\theta^i_\infty(t), \phi_\infty^1(t), \ldots, \phi^i(t),\ldots, \phi_\infty^D(t),  \mu_\infty(t)) $
\end{lemma}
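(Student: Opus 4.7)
\textbf{Proof plan for Lemma~\ref{lem: ode_solution}.}

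My plan is to pass to the limit in the integral form of the trajectories in \eqref{eq: int_trajectory} along the common joint subsequence constructed in the paragraph preceding the lemma. By definition,
\begin{equation*}
\theta^i_n(t) - \theta^i_n(0) = \int_0^t \tilde{\nabla} l^i \bigl( \theta^i_n(x), \phi_n(x), \mu_n(x) \bigr) dx,
\end{equation*}
and a completely analogous identity holds for $\phi^i_n$. The left-hand side converges pointwise to $\theta^i_\infty(t) - \theta^i_\infty(0)$ because the convergence $\theta^i_n \to \theta^i_\infty$ takes place in $\C([0,\infty),\R^{p_\theta^i})$ (uniformly on compacts). So the work is entirely on the right-hand side: I have to show that, along the common subsequence,
\begin{equation*}
\int_0^t \tilde{\nabla} l^i \bigl( \theta^i_n(x), \phi_n(x), \mu_n(x) \bigr) dx \longrightarrow \int_0^t \tilde{\nabla} l^i \bigl( \theta^i_\infty(x), \phi_\infty(x), \mu_\infty(x) \bigr) dx.
\end{equation*}
Once this is done, differentiating the resulting integral equation in $t$ yields the claimed ODE (the integrand is continuous in $t$ for a.e.\ $x$ because $\theta^i_\infty,\phi^i_\infty$ are continuous and $\mu_\infty(x)$ is a measurable family, enough for solutions in the Carathéodory sense).

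To establish this integral convergence, I would split the difference in two pieces:
\begin{equation*}
\underbrace{\int_0^t \bigl[ \tilde{\nabla} l^i(\theta^i_n(x), \phi_n(x), \mu_n(x)) - \tilde{\nabla} l^i(\theta^i_\infty(x), \phi_\infty(x), \mu_n(x)) \bigr] dx}_{\text{(I)}}
\end{equation*}
\begin{equation*}
+ \underbrace{\int_0^t \bigl[ \tilde{\nabla} l^i(\theta^i_\infty(x), \phi_\infty(x), \mu_n(x)) - \tilde{\nabla} l^i(\theta^i_\infty(x), \phi_\infty(x), \mu_\infty(x)) \bigr] dx}_{\text{(II)}}.
\end{equation*}
For (I), assumption (A3)(a) confines all parameter trajectories to a common sample-path dependent compact set, so by Lemma~\ref{lem: local_lip} the gradient $\tilde{\nabla} l^i$ is Lipschitz in $(\theta^i,\phi)$ with a uniform constant $L$ restricted to this set. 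Thus the integrand in (I) is bounded by $L\bigl(\|\theta^i_n(x)-\theta^i_\infty(x)\| + \sum_j \|\phi^j_n(x)-\phi^j_\infty(x)\|\bigr)$, which tends to zero uniformly on $[0,T]$ by the $\C([0,T],\cdot)$-convergence of the subsequence, hence the whole integral vanishes.

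The harder piece, and the main obstacle, is (II): here only the measure argument varies, and the convergence $\mu_n \to \mu_\infty$ is in the weak (measurable-function) topology introduced just before the lemma. Using the remark that $\mu_n(x)$ converges in distribution to $\mu_\infty(x)$ in $\P(\cS \times \cA)$ for almost every $x \in [0,T]$ (as in \cite[Lemma~4]{ramaswamy2021deep}), I would combine this with the fact that for each fixed $x$ the map
\begin{equation*}
\nu \mapsto \int_{\cS\times\cA} \nabla_{\theta^i} l^i(\theta^i_\infty(x),\phi_\infty(x),s,a)\, \nu(ds,da)
\end{equation*}
is weakly continuous, because the integrand is continuous and bounded on the compact effective state-action domain (using (A3), (A4), (A6) and the smoothness from (A5), together with the boundedness of the trajectories). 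Hence the integrand of (II) converges to zero for a.e.\ $x\in[0,T]$ and is uniformly bounded, so dominated convergence gives $(II)\to 0$. Combining (I) and (II) yields the integral identity for $\theta^i_\infty$, and differentiating proves part (a). Part (b) is proved identically using the marginalized measure $\mu_\infty(\cdot,\cA)$ since $\tilde{\nabla} g^i$ depends only on the state marginal.
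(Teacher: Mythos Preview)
Your proposal is correct and follows essentially the same route as the paper's proof: the same two-term splitting into a parameter-change piece (handled via the local Lipschitz property from Lemma~\ref{lem: local_lip} together with uniform convergence on $[0,T]$) and a measure-change piece (handled via the a.e.\ weak convergence $\mu_n(x)\to\mu_\infty(x)$ and boundedness/continuity of the integrand). The paper is terser on the measure-change term, but the decomposition and the tools invoked are identical.
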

\begin{proof}
    See Appendix \ref{app:proofs}.
\end{proof}

We can now study the limit of 3DPG as a solution to the aforementioned non-autonomous ODE's. Specifically, append the ODEs to form a new ODE in the appended parameter space. The rest of the analysis follows the line of argument in \cite[Thm. 1]{ramaswamy2021deep}, so we only state the main conclusion. Let $(\theta^1, \ldots, \theta^D, \phi^1, \ldots, \phi^D)$ be a solution to the appended ODE, then the solution converges to an equilibrium of the appended ODE, i.e. $\tilde{\nabla} l^i (\theta^i, \phi, \overline{\mu}^i_\infty) = 0$ and  $\tilde{\nabla} g^i(\theta^i, \phi, \overline{\mu}^i_\infty) = 0$ for all $i$,
where $\lim\limits_{t\to \infty} \mu^i_\infty(t) \overset{d}{=} \overline{\mu}^i_\infty$.
\Cref{lem: StochApproximation_lemma} and \Cref{lem: ode_solution} show that the joint of the sequences $\overline{\theta}^i(t_{n(k)})$ and $\overline{\theta}^i(t_{n(k)})$ are solutions to the appended ODE for $\{n(k)\}_{k\ge0} \subset \{n\}_{n\ge0}$. The last two statements thus show that the limits of $\overline{\theta}^i(t_{n(k)})$ and $\overline{\theta}^i(t_{n(k)})$, let us call them $\overline{\theta}^i_\infty$ and $\overline{\phi}^i_\infty$, are equilibrium points of the ODE's in \Cref{lem: ode_solution}. These limits determine the long-term behavior
of 3DPG. 

Finally, the first part of Theorem \ref{thm:main} now follows (for the particular case of experience replay with size 1 and global information access without communication) using (A3) to swap the order of differentiation and integration in $\tilde{\nabla} l^i$ and $\tilde{\nabla}g^i$. 
It left to show \Cref{thm:main} for 3DPG with experience replays and only local information access, and to show that the limiting distributions $\overline{\mu}^i_\infty(s, \cA)$ are stationary w.r.t. the state Markov process. Both are the subject of the next section.

\section{Extension to Experience Replays}
\label{sec: extensions}

Experience replay buffers play an important role in stabilization of RL algorithms \cite{mnih2015human}. The fundamental idea is to learn from past experiences to reduce the bias of an RL algorithm towards the interactions of an agent with its environment. For 3DPG this means that at time $n$ an agent does not use the transition $t_n^i$ to calculate the loss gradients, but it uses a random minibatch of past transitions $t_m^i$ from old time steps $m\le n$. As a consequence, the training algorithm is not overtly biased agents interaction with the environment, reducing learning variance, thereby improving stability.

In the previous section, we analyzed 3DPG for centralized training where the global transitions $t_n^i$ are locally available for every agent~$i$. Additionally, we only used an experience replay of size one. To accommodate the use of experience replays in \Cref{sec:analysis}, the probability measure $\mu(t)$ needs to be redefined. In \eqref{eq: AC_iteration}, each agent~$i$ samples $M<N$ global tuples independently from its (local) random experience replay $\cR^i_n$ at every iteration $n$. The sampling processes of the agents will in general be different. Further, we will experience $\cR^i_n \not= \cR^j_n$, since the global tuples are communicated by the agents in a potentially delaying manner. 

We now define a new measure process $\mu^i(t)$ for each agent. For $t \in [t_n, t_{n+1})$, define 
\begin{equation}
\label{eq:new_measures}
    \mu^i(t) \coloneqq \frac{1}{M} \sum_{j=1}^M \delta_{(s_{m(n,j,i)}, a_{m(n,j,i)})}.
\end{equation}
Hence, $\mu^i(t)$ is the probability measure on $\cS \times \cA$ that places a mass of $\nicefrac{1}{M} $ on each pair $(s_{m(n,j,i)}, a_{m(n,j,i)})$ for $1\le j \le M$, where each $m(n,j,i)$ denotes one of the time indices sampled by agent $i$ at time $n$ from its memory $\cR^i_n$. 
Notice that in the presence of communication and AoI, each experience replay is a random sequence of sets. 
If we use the redefined measures in \eqref{eq: average_gradients} we get for every $t = t_n$ that
\begin{equation}
\begin{split}
    &\tilde{\nabla}  l^i (\theta^i(t), \phi(t), \mu^i(t)) \\ &= \frac{1}{M} \sum_{j=1}^M \nabla_{\theta^i} l^i(\theta^i_n, \phi_n, s_{m(n,j,i)}, a_{m(n,j,i)}),\\
    &\tilde{\nabla}  g^i (\theta^i(t), \phi(t),  {\mu^{i}}(t)) = \frac{1}{M} \sum_{j=1}^M \nabla_{\phi^i} g^i (\theta^i_n, \phi_n, s_{m(n,j,i)} ).
\end{split}
\end{equation}
The analysis presented in \Cref{sec:analysis} is also true for then new measure processes, where now every agent has its own measure process. This shows the first part of \Cref{thm:main}. It is left to characterize the properties of the limiting measure processes $\mu^i_\infty$, which are the limits of the convergent subsequence extracted from $\mu^i_n(t) \coloneqq \mu^i(t_n+t)$. 

\begin{lemma}
    \label{lem:Markov}
    For all $t\in [0,\infty)$ and for all agents~$i$
    \begin{equation}
    \mu^i_\infty(t,dy \times \cA) = \int_\cS p(dy \mid x , \phi_\infty(t)) \mu^i_\infty(t,dx\times \cA).
\end{equation}
In other words, the limiting marginals constitute stationary distributions over the state Markov process.
\end{lemma}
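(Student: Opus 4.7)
The plan is to test the claimed identity against an arbitrary continuous bounded function $f \in C_b(\cS)$, showing that
\begin{equation*}
\int_\cS f(y)\, \mu^i_\infty(t, dy \times \cA) \;=\; \int_\cS \int_\cS f(y)\, p(dy \mid x, \pi(x;\phi_\infty(t)))\, \mu^i_\infty(t, dx \times \cA),
\end{equation*}
for almost every $t \ge 0$. Since $\mu^i_\infty$ is the weak limit (along a subsequence) of the empirical measures $\mu^i_n(t)$ built from the replay samples indexed by $m(n,j,i)$, I will work with pre-limit empirical averages and pass to the limit at the end, using the Portmanteau theorem.

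First I would rewrite the left-hand-side empirical average using the Markovian structure of the trajectory. Concretely, for each sampled index $m = m(n,j,i)$ with $m \ge 1$, write $s_m$ as the successor of $s_{m-1}$ under action $a_{m-1}$ drawn from the exploration-augmented policy at parameter $\phi_{m-1}$. This lets me express
\begin{equation*}
\frac{1}{M}\sum_{j=1}^M f(s_{m(n,j,i)}) \;=\; \frac{1}{M}\sum_{j=1}^M \int_\cS f(y)\, p(dy \mid s_{m(n,j,i)-1}, \phi_{m(n,j,i)-1}) \;+\; \xi^i_n,
\end{equation*}
where $\xi^i_n$ is a zero-mean noise term collecting the martingale differences $f(s_m) - \int f\, p(\cdot \mid s_{m-1}, \phi_{m-1})$ aggregated over the minibatch. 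Boundedness of $f$ together with (A3)(b) makes these differences bounded, so a straightforward martingale law-of-large-numbers argument along the subsequence (combined with the fact that samples are drawn fresh at every $n$ from the replay) shows $\xi^i_n \to 0$ almost surely as $n \to \infty$.

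Next I would replace $\phi_{m(n,j,i)-1}$ by $\phi_\infty(t)$ inside the integral. The data-AoI hypothesis (A1)(b), which bounds $\Delta^i(n)$ in mean, ensures that every index $m(n,j,i)$ in the replay at time $n$ satisfies $n - m(n,j,i) = o(n)$ in the appropriate sense (following the same Borel--Cantelli argument used in \Cref{lem:finite_CDFsum}), so the sampled indices diverge to infinity along with $n$. Since $\phi_m \to \phi_\infty$ along the convergent subsequence in $\C([0,\infty), \R^{p_\phi})$, and since the transition kernel $p$ is continuous by (A4) while $f$ and $\pi^i$ are continuous with images in compact sets (A3)(b), (A5), the Feller-type estimate
\begin{equation*}
\sup_{x \in \cS}\Bigl| \int f(y)\, p(dy \mid x, \phi_{m-1}) - \int f(y)\, p(dy \mid x, \phi_\infty(t)) \Bigr| \;\to\; 0
\end{equation*}
holds on the sample path. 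Combining this with the vanishing noise term lets me rewrite the left-hand-side empirical average as $\frac{1}{M}\sum_j \int f(y)\, p(dy \mid s_{m(n,j,i)-1}, \phi_\infty(t)) + o(1)$.

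Finally, I would identify this last expression with an integral against $\mu^i_n(t-\alpha(n), \cdot)$ up to an $o(1)$ index shift that is negligible because $\alpha(n) \to 0$ and $\mu^i_n$ is piecewise-constant with jumps of order $\alpha(n)$ in the rescaled time. Passing to the limit along the subsequence and using that $x \mapsto \int f(y)\, p(dy \mid x, \phi_\infty(t))$ is continuous and bounded (so that $\mu^i_n \Rightarrow \mu^i_\infty$ integrates it correctly) yields the desired identity for the chosen $f$. Since $f \in C_b(\cS)$ was arbitrary, the marginal identity for measures follows. I expect the main obstacle to lie in the second step: controlling the martingale error uniformly over the random minibatch indices $m(n,j,i)$ when the replay is itself a random, AoI-dependent set of past indices; this requires combining the martingale LLN with the tail control on $\Delta^i(n)$ from (A1)(b) to guarantee that the sampling distribution puts vanishing mass on indices where $\phi_{m-1}$ has not yet stabilised near $\phi_\infty(t)$.
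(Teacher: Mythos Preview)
Your plan shares the right scaffold with the paper's argument---test against $f\in C_b(\cS)$, exploit the Markov transition to produce a martingale-type error, then use the AoI control from (A1)(b) together with continuity of $p$ and $\pi$ to swap $\phi_{m-1}$ for the limiting policy. Where it diverges from the paper, and where it currently has a genuine gap, is precisely the point you flag as the ``main obstacle''.

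The differences $f(s_m)-\int f\,p(\cdot\mid s_{m-1},\phi_{m-1})$ are martingale increments in the \emph{trajectory} time $m$, not in the \emph{training} time $n$. When you average them over the random minibatch indices $m(n,j,i)$ drawn from the replay, the resulting $\xi^i_n$ is \emph{not} a martingale difference in $n$ with respect to any natural filtration, so a ``straightforward martingale LLN along the subsequence'' is not available. The paper resolves this by a reindexing trick: for each trajectory time $m$, define the aggregated step size $\gamma(m)\coloneqq\sum_{\{i:\,k_i=m\}}\alpha(i)$, i.e.\ the total weight that the training procedure ever places on the sample from time $m$. One then forms the sum $\xi_n=\sum_{m<n}\gamma(m)\bigl[f(s_{m+1})-\int f\,p(\cdot\mid s_m,\phi_m)\bigr]$, which \emph{is} a martingale in $m$ with respect to $\cF_{n-1}=\sigma\langle s_m,a_m,\phi_m,\gamma(m-1)\mid m\le n-1\rangle$. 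The paper then checks that $\gamma$ is square-summable (using $\alpha(n)\in\cO(n^{-1/q})$), so the martingale converges, and only afterwards ``spreads out and rearranges'' the sum back into training-time blocks $\sum_{m=n}^{\delta(n,t)}\alpha(m)[\cdots]$ over windows $[t_n,t_n+t]$.

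A second, related gap is your final identification step. The state $s_{m(n,j,i)-1}$ is the predecessor along the \emph{trajectory}, not a sample that appears in $\mu^i_n(t-\alpha(n),\cdot)$; the replay draw at training step $n-1$ bears no particular relation to the trajectory index $m(n,j,i)-1$. The paper avoids this by never attempting an index shift in training time: instead it works with the time-integral $\int_{t_n}^{t_n+t}[\cdots]\,\mu(z,ds,\cA)\,dz$ directly, uses the AoI bound $k_n\in[n-\varepsilon n^{1/q_2},n]$ to control the boundary terms of the rearrangement, and only at the very end shows that replacing $s_{k_m+1}$ by $s_{k_m}$ costs a vanishing Abel-type sum $\sum_m[\alpha(m)-\alpha(m+1)]f(s_{k_m+1})$. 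If you adopt the $\gamma(m)$ reindexing and the time-integral formulation, the rest of your outline (AoI control via \Cref{lem:finite_CDFsum}, continuity of $p$ to swap policies, weak convergence to pass to $\mu^i_\infty$) goes through essentially as written.
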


\begin{proof}
    Without loss of generality, assume a batch-size $M=1$. The cases $M>1$ will only require additional bookkeeping. Recall that the samples used in the 3DPG iterations \eqref{eq: AC_iteration} are potentially old and from random time-steps, such that (A1)(b) holds.\footnote{Notably, (A1)(b) in conjunction with the Borel-Cantelli lemma guarantee that infinitely many global transitions reach each agent.} Fix some agent~$i$. In the following, we will drop the agent index $i$. Since $M=1$, the agent uses a global transition $t_{k_n}$ with random time index $k_n$ for its 3DPG training step at time $n$.

    Pick $f \in C_b(\cS)$, the convergence determining class for distributions on $\cS$. We analyze
    \begin{align}
        \label{eq:terms_markov_analysis}
        &\int_{t_n}^{t_{n+1}} \left[ f\left(s\right) - \int_{\cS} f(y) p\left(dy \mid s, \phi_{n}\right) \right] \mu(z, ds,\cA) dz \nonumber \\
        &= \alpha(n)\left[ f\left(s_{k_n} \right) - \int_{\cS} f(y) p\left(dy \mid s_{k_n}, \phi_{n}\right) \right]
    \end{align}
    as $n \to \infty$. 
    The error terms \eqref{eq:terms_markov_analysis} consider the deviation between states sampled from $\mu(t)$ and the associated expected transition under the \emph{policy that uses the sample during training}. This is the perspective of an experimenter that observes the Markov game and the 3DPG algorithm during runtime. 

    Now accumulate the aforementioned deviations for all time steps where a sample $t_n$ would be used during training and evaluate the deviation under the policy at time $n$. This information is of course not required for the algorithm and solely a quantity for the analysis. 
    In summary, consider
    \begin{equation}
        \label{eq:experimenter}
        \gamma(n) \left[f\left(s_{n+1}\right) - \int_{\cS} f(y) p\left(dy \mid s_{n}, \phi_{n}\right) \right]
    \end{equation}
    at time $n$, where 
    \begin{equation}
        \gamma(n) \coloneqq \sum_{i \in \{i \ge 0 \mid k_i = n\}} \alpha(i).
    \end{equation}
    for every $n \ge 0$ with $\sum_{i\in \emptyset} = 0$. Recall that $\alpha(n) \in \cO(n^{-\frac{1}{q}})$ with $q\in [1,2)$, then
    \begin{equation}
        \sum_{k=0}^n\gamma(k) \le \cO\left(\sum_{k=0}^{2n} \alpha(k) \right) \le \cO \left(\int_1^{2n} x^{-\frac{1}{q}} dx \right)
    \end{equation}
    since at time $n$ a sample from the replay memory can be at most $n$ time steps old. It then follows that $\frac{1}{n} \sum_{k=0}^n\gamma(k) \in \cO\left(n^{-\frac{1}{q}}\right) $, which in turn implies that $\gamma(k) \in \cO\left(n^{-\frac{1}{q}}\right) $ and thus that 
     $\gamma(m)$ is square summable.
    
    To analyze $\eqref{eq:experimenter}$, we now consider the sequence
    \begin{equation}
        \label{eq: stat_lemma_eq1b}
        \xi_n \coloneqq \sum_{m = 0}^{n-1} \gamma(m) \left[f\left(s_{m+1}\right) - \int_{\cS} f(y) p\left(dy \mid s_{m}, \phi_{m}\right) \right]
    \end{equation}
    and the filtration 
    \begin{equation}
        \cF_{n-1} \coloneqq \sigma \langle s_m,a_m,\phi_m, \gamma(m-1) \mid m \le n-1 \rangle.
    \end{equation}
    Then $\xi_n$ is a Martingale w.r.t $\cF_{n-1}$. 
    Since $f$ is bounded and $\gamma(m)$ is square summable, the quadratic variation process associated with the Martingale $\xi_n$ is convergent. It then follows from the Martingale Convergence Theorem \cite{durrett2019probability} that $\xi_n$ converges almost surely.
   
    Recall that we denote by $\Delta(n)$ the age of the oldest sample in the replay memory. \Cref{lem:finite_CDFsum} now also holds for $\Delta(n)$ using (A1)(b), i.e. fix $\varepsilon \in (0,1)$ then there is a sample path dependent constant $N(\varepsilon) \in \N$, such that $\Delta(n) \le \varepsilon n^{\frac{1}{q_2}}$ for all $n\ge N(\varepsilon)$ with $q_2$ from (A1)(b). In other words,
    \begin{equation}
    \label{eq:interval_kn}
        k_n \in [n- \varepsilon n^{\frac{1}{q_2}}], \text{ for all } n\ge N(\varepsilon).
    \end{equation}
    
    Since $\xi_n$ converges, it follows that for every $t>0$
    \begin{equation}
        \label{eq: stat_lemma_eq2}
        \sum_{m=n- \varepsilon n^\frac{1}{q_2}}^{\delta(n,t)} \gamma(m) \left[f\left(s_{m+1}\right) - \int_{\cS} f(y) p\left(dy \mid s_{m}, \phi_{m}\right) \right]
    \end{equation}
    converges to zero a.s., where $$\delta(n,t) \coloneqq \min \{m\ge n \mid t_m \ge t_n + t\}.$$
    Next, spread out and rearrange the aggregated samples in \eqref{eq: stat_lemma_eq2}. Specifically, use  \eqref{eq:interval_kn} and
    separate the samples as the following three terms, whose sum converges to zero a.s.: 
    \begin{equation}
        \begin{split}
            &\sum_{m=n}^{\delta(n,t)} \alpha(m) \left[f\left(s_{k_m+1}\right) - \int_{\cS} f(y) p\left(dy \mid s_{k_m}, \phi_{k_m}\right) \right] \\
        &+ \cO\left( \sum_{m=n-\varepsilon n^\frac{1}{q_2}}^n \alpha(m) \right) + \cO\left( \sum_{m=\delta(n,t)}^{\delta(n,t)+\varepsilon \delta(n,t)^\frac{1}{q_2}} \alpha(m) \right)
        \end{split}
    \end{equation}
    Notably, the rearrangement is mathematically valid since \eqref{eq: stat_lemma_eq2} is a finite sum for each $n$.
    Since $q_2 \ge  q$, it follows that the second and third term converge to zero. The proofs use the same line of argument as to show \eqref{eq: ana_claim} in \Cref{lem:AoI}.
    We can therefore conclude that
    \begin{equation}
    \label{eq: stat_lemma_eq4}
        \sum_{m=n}^{\delta(n,t)} \alpha(m) \left[f\left(s_{k_m+1}\right) - \int_{\cS} f(y) p\left(dy \mid s_{k_m}, \phi_{k_m}\right) \right]
    \end{equation}
    converges to zero a.s. 
    
    Equation \eqref{eq: stat_lemma_eq4} now also holds, if we replace $\phi_{k_m}$ by $\phi_m$ since the resulting error terms when taking the difference between \eqref{eq: stat_lemma_eq4} and the version with $\phi_m$ converges to zero. To see this, note that $\sum_{m=n}^{\delta(n,t)-1} \alpha(m) \in \cO(t)$ by construction. Further, all individual error terms in the aforementioned difference converge to zero using weak convergence by continuity of $(s,\phi) \mapsto p(\cdot \mid s, \phi)$ and since $\norm{\phi_{k_m}- \phi_m} \to 0 \text{ a.s.}$
    Finally, $\alpha(n)$ is eventually decreasing, hence
    $\sum_{m=n}^{\delta(n,t)}\left[ \alpha(m) - \alpha(m+1)\right] f(s_{k_m + 1}) \to 0 \text{ a.s.}$. In summary, we have thus shown that
    \begin{equation}
    \label{eq: stat_lemma_eq4b}
        \sum_{m=n}^{\delta(n,t)} \alpha(m) \left[f\left(s_{k_m}\right) - \int_{\cS} f(y) p\left(dy \mid s_{k_m}, \phi_{m}\right) \right]
    \end{equation}
    converges to zero a.s. 
    With \eqref{eq:terms_markov_analysis} and \eqref{eq: stat_lemma_eq4b} it then follows that
    \begin{equation}
        \label{eq: stat_lemma_eq5}
        \begin{split}
            \int_{t_n}^{t_n+t} \int_{\cS} \Big[f\left(s\right) - h(z,s)\Big] \mu(z, ds, \cA) dz \to 0 \text{ a.s.}
        \end{split}
    \end{equation}
    where $h(z,s) \coloneqq \int_{\cS} f(y) p\left(dy \mid s, \overline{\phi}(z) \right).$ The lemma now follows from \eqref{eq: stat_lemma_eq5}. We refer to \cite[Lemma 6]{ramaswamy2021deep} for details on this final step.
\end{proof}

\section{Cooperative Training of MAS based on Old Actions vs. Old Policies (MADDPG vs. 3DPG)}
\label{sec:comparison} 

In this section, we discuss the difference between 3DPG and MADDPG \emph{for the centralized training scenario} with global information access.

\subsection{The MADDPG policy gradient iteration}
An MADDPG agent~$i$ updates its policy using the gradient
\begin{equation}
\label{eq:MADDPG}
    \frac{1}{M} \sum_{m} \nabla_{\phi^i} g_{\text{MADDPG}}^i(\theta^i_n, \phi^i_{n}, s_m, a^{j\not=i}_m)
\end{equation}
for $1\le m \le M$ sampled transitions, with  $\nabla_{\phi^i} g_{\text{MADDPG}}^i$ as defined in \eqref{eq: lowe_grad_0}. Please again observe the difference compared to the second iteration  in our 3DPG algorithm \eqref{eq: AC_iteration}. 
In MADDPG old actions $a^j_m$ for all $j\not=i$ are used from the samples of the experience replay. In contrast the true current policy $a_j = \pi_{\phi^j_n}(s_m^j)$ is used in the 3DPG policy gradient iteration (assuming the centralized setting). MADDPG still uses the policies of other agents in the critic iteration. Hence, availability of the policies from other agents is anyway required.

Lets try to understand the subtle difference between 3DPG and MADDPG intuitively.
From the perspective of some agent~$i$ the MADDPG policy gradient iteration
appends the product action space of other agents $\prod_{j\not= i}\cA_j$ to the global state space $\cS$. 
For illustration, suppose agent~$i$ samples transitions $\{t_m\}_{m=1}^M$, and lets suppose all local states are equal, i.e. $s^i_m = s^i$. Then \eqref{eq:MADDPG} gives $\nabla_{\phi^i} \pi_{i}(s^i;\phi^i_n) $ times
\begin{equation}
\label{eq:MADDPG heuristic}
\left(  \frac{1}{M} \sum_{m} \nabla_{a^i} Q^i(s^i,a_m^1, \ldots, \pi_{i}(s^i; \phi^i), \ldots, a^D_m)  \right)
\end{equation} 
as the sample policy gradient. The expression averages over sampled actions of the other agents. These actions will sometimes include random actions due to exploration.\footnote{In most DeepRL algorithms the probability to select a random action is decayed to a small value over time. However, it is usually kept positive to also allow some asymptotic exploration.} This indicates that agents would learn polices that also act well for random behavior of other agents. This seems to be undesirable for cooperative learning. We will now make the above heuristic precise using \Cref{thm:main} and its variant for MADDPG. 

\subsection{The MADDPG limit vs. the 3DPG limit}
\label{sec:MADDPG_withExp}

First, we discuss the analog of \Cref{thm:main} for MADDPG
Specifically, consider \eqref{eq: AC_iteration} with $\nabla_{\phi^i} g_{\text{MADDPG}}^i$ as defined in \eqref{eq: lowe_grad_0} instead of 
$\nabla_{\phi^i} g_{\text{3DPG}}^i$ as defined in \eqref{eq:central_actor_grad}.

To analyze MADDPG with experience replays, we use the same measure processes \eqref{eq:new_measures} as in the \Cref{sec: extensions}. However, we need to redefine the average policy gradient in \eqref{eq: average_gradients} to
\begin{equation}
    \begin{split}
        \tilde{\nabla} g_{\text{MADDPG}}^i&(\theta^i, \phi^i, \nu) \coloneqq \int \nabla_{\phi^i} g_{\text{MADDPG}}^i (\theta^i, \phi^i_n, s, a^{j\not=i})  \\ & \nu(ds, da^1, \ldots, \cA^i, \ldots, da^D).
    \end{split}
\end{equation}
The analysis from \Cref{sec:analysis,sec: extensions}  can now be emulated for this gradient with the measure processes $\mu^i(t)$. Due to the new average policy gradient the conclusion of the convergence theorem are now fundamentally different. MADDPG converges to limits $\theta_{\text{MA}}^i$ and $\phi_{\text{MA}}^i$, such that
\begin{equation}
\label{eq:MADDPG_limit}
     \tilde{\nabla}g_{\text{MA}}^i(\theta_{\text{MA}}^i, \phi_{\text{MA}}^i, \mu_{\text{MA}}^i) = 0,
\end{equation}
Here, $\mu_{\text{MA}}^i$ are the local limiting distribution of the sampled experience at agent~$i$ under MADDPG. Recall that the 3DPG limit satisfies:
\begin{equation}
    \label{eq:3DPG_limit}
        \tilde{\nabla} g_{\text{3DPG}}^i(\overline{\theta}_\infty^i, \overline{\phi}_\infty, \overline{\mu}^i_\infty) = 0
\end{equation}

In \eqref{eq:MADDPG_limit}, the behavior of the other agents is solely present in the limiting measures $\mu_{\text{MA}}^i$. When exploration is stopped after some time, then the asymptotic properties of MADDPG and 3DPG are the same. However, when the exploration probability is not decayed to zero asymptotically, the the limiting measures $\mu^i_{MA}$ are also shaped by random actions. This formalizes the heuristic from the previous subsection:
the presence of exploration can deteriorate the policies found by MADDPG agents. The agents would adapt their policies to random actions that are not representative of the other agents. This is clearly an undesirable property. 3DPG fares better in this regard as it does not have this negative property! \textbf{3DPG allows a high exploration probability asymptotically without a negative effect on the found policies.} Next, we discuss the moving target problem. 

\begin{remark}
In practice, the aforementioned negative property of MADDPG due random actions is exacerbated due to the fact that training is stopped in finite time, possibly in a premature manner. During training the other agents may have initially behaved in a certain way, which was then well represented in the replay memory an agent. During later stages of learning, the other agents may ``quickly" converge to a different policy. Since the agent uses outdated samples to calculate local gradients, the policy evolution is significantly biased towards the old behavior of the other agents. This is particularly undesirable in cooperative problems. Again, 3DPG circumvents such scenarios by the using the latest available agent policies. MADDPG can overcome these issues by stopping exploration after some time and by using decaying learning rates.
\end{remark}

\subsection{The moving target problem in MARL}
\label{sec:moving_target}

Multi-agent RL algorithms are effected by non-stationarity due to the change of other agents behavior from the perspective of one agent (the so called moving target problem). Here, 3DPG is no exception.
The authors of \cite{lowe2017multi}, discuss that using the local policy gradients $\nabla_{\phi^i} g_{\text{MADDPG}}^i$ based on old actions of other agents removes the non-stationarity from the perspective of each agent. 
We believe that this not wholly true. MADDPG smooths out the non-stationarity, as the behavior of other agents is observed via the sampled actions from the experience replays. But this does not remove the non-stationarity. 
Intuitively, it takes longer for the behavior of an agent to manifest in the replay memory (in form of samples) compared to directly using the behavior of an agent using its policy as in 3DPG. 

Since 3DPG does use the policies of other agents, we expect that 3DPG has more variance due to the changing behavior of other agents. However, since the 3DPG agents use more accurate information from the other agents' policies and are not affected by exploration as described in the previous section, we expect a faster convergence rate compared to MADDPG. Both predictions, are validated in our experiments in the following \Cref{sec: numerical}. Finally, notice that the moving target problem has no impact on \Cref{cor:nash}. The 3DPG agents converge to a local Nash equilibrium.

\section{Numerical Experiments}
\label{sec: numerical}

For our numerical experiments, we first compare 3DPG and MADDPG in a centralized training setting with global information access. Afterwards, we evaluate 3DPG with communication, where local states, actions and policies have to be communicated.

\subsection{Experiment 1}

For the comparison of 3DPG and MADDPG, we consider a multi-agent coordination problem. We add an additional coordination layer to a version of the simple spread multi particle environment (SSMPE) presented in \cite{lowe2017multi}. In the SSMPE, particles (the agents) move in the plane to cover a number of landmarks and the landmarks are episodically reset to new positions. The agents get a global reward $$\exp{(-\text{average closest distance to the landmarks})} \in (0,1).$$ In this particular scenario, we did not observe any significant difference between 3DPG and MADDPG. This is because in SSMPE, the agents' actions do not require any form of coordination and the global reward at every time-step is only a function of the global state. 

We now add another coordination layer to the environment. We now consider two agents that move around in the plane with the objective to minimize their average distance to three landmarks as before. However, the agents only get rewarded if they move/watch in the same direction. Specifically, the previous global reward gets weighted by
\begin{equation}
\label{eq:angle}
    \exp{(-\text{angle between the orientation of the agents})}.
\end{equation}
The new reward structure therefore requires that the agents coordinate their decisions. More details and algorithm hyperparameters used for training are presented in Appendix \ref{app:env&algo}.

We trained both 3DPG and MADDPG with global data access and global access to all agents' policies, i.e. for the centralized training setting without communication and AoI. We trained the agents for 10 seeds over 1500 epochs, where each epoch had a horizon of 25 steps. \Cref{fig: comp} shows the the resulting average reward per epoch. 
\begin{figure}
\centering
\includegraphics[width=.47 \textwidth]{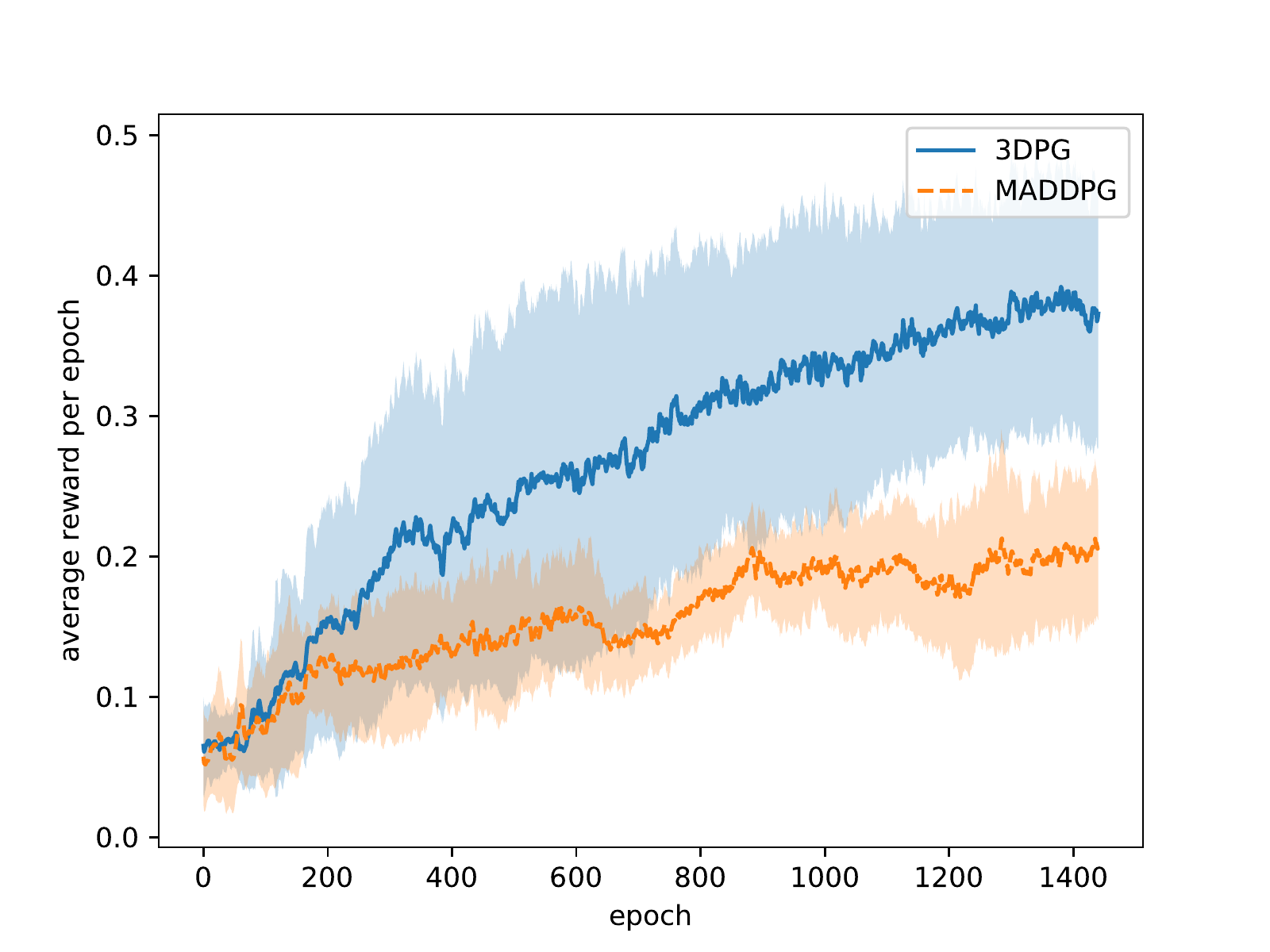}
\caption{Comparison of 3DPG and MADDPG with centralized training. }
\label{fig: comp}
\end{figure}
Our simulations support precisely our theoretical predictions from the previous section: \textbf{For problems that require coordinated actions, 3DPG obtains better policies faster than MADDPG at the cost of higher training variance.}

\subsection{Experiment 2}

In our second experiment, we show that 3DPG with communication is
robust to large AoI and that 3DPG may even benefit from using older policies of other agents similar to how target networks improve training in single agent RL \cite{lillicrap2015continuous}. 

We again consider the two agent, three landmark problem from Experiment 1. In addition, we consider that each of the two agents uses an independent communication channel for communication. Specifically, each agent has a fixed communication budget of 15000 bits/slot to communicate with the other agent whenever their channel access is success. We emulate lossy communication by varying the channel access probability $\lambda \in \{e^{-1}, e^{-2}, e^{-3}, e^{-4} \} \approx \{0.3679, 0.1353, 0.0498, 0.0183 \} $. This simple communication model satisfies our network assumptions (A1). For 3DPG, we use the same hyper-parameter configurations as in Experiment 1 (Appendix \ref{app:env&algo}). For this setting, the agents therefore require at least 3 successive successful communication events to exchange a parameter vector $\phi_n^i$, while at least 33 local tuples $(s^i_n,a^i_n, s^i_{n+1})$ could be exchanged during the same time. We let each agent cycle between communicating a policy update followed by communicating 33 local tuples. In that sense we give ``equal weight'' to policy and data communication.

In \Cref{fig:comparison_aoi}, we show the average reward per epoch.
We see that 3DPG with even $\lambda=e^{-4}$  is able to learn decent policies compared to centralized 3DPG, albeit at a slower convergence rate. Notably, the $\lambda=e^{-4}$ run achieves this with AoIs frequently  over 500 time steps (20 epochs) as shown in \Cref{fig:aoi}. In addition, 3DPG with $\lambda=e^{-4}$ has only access to $\nicefrac{1}{3}$ of the global data tuples that are used by 3DPG with centralized training. This shows that 3DPG is highly robust to AoI and low data availability. Finally, an interesting observation is that the 3DPG runs with $\lambda=e^{-1},e^{-2}$ or $e^{-3}$ consistently performed similar or even better than 3DPG centralized. This indicates that 3DPG may even benefit from using older policies of other agents. 

\begin{figure}
\centering
\subfigure[Average reward per epoch of 3DPG with variance over seeds.]{%
\label{fig:comparison_aoi}%
\includegraphics[width=.47 \textwidth]{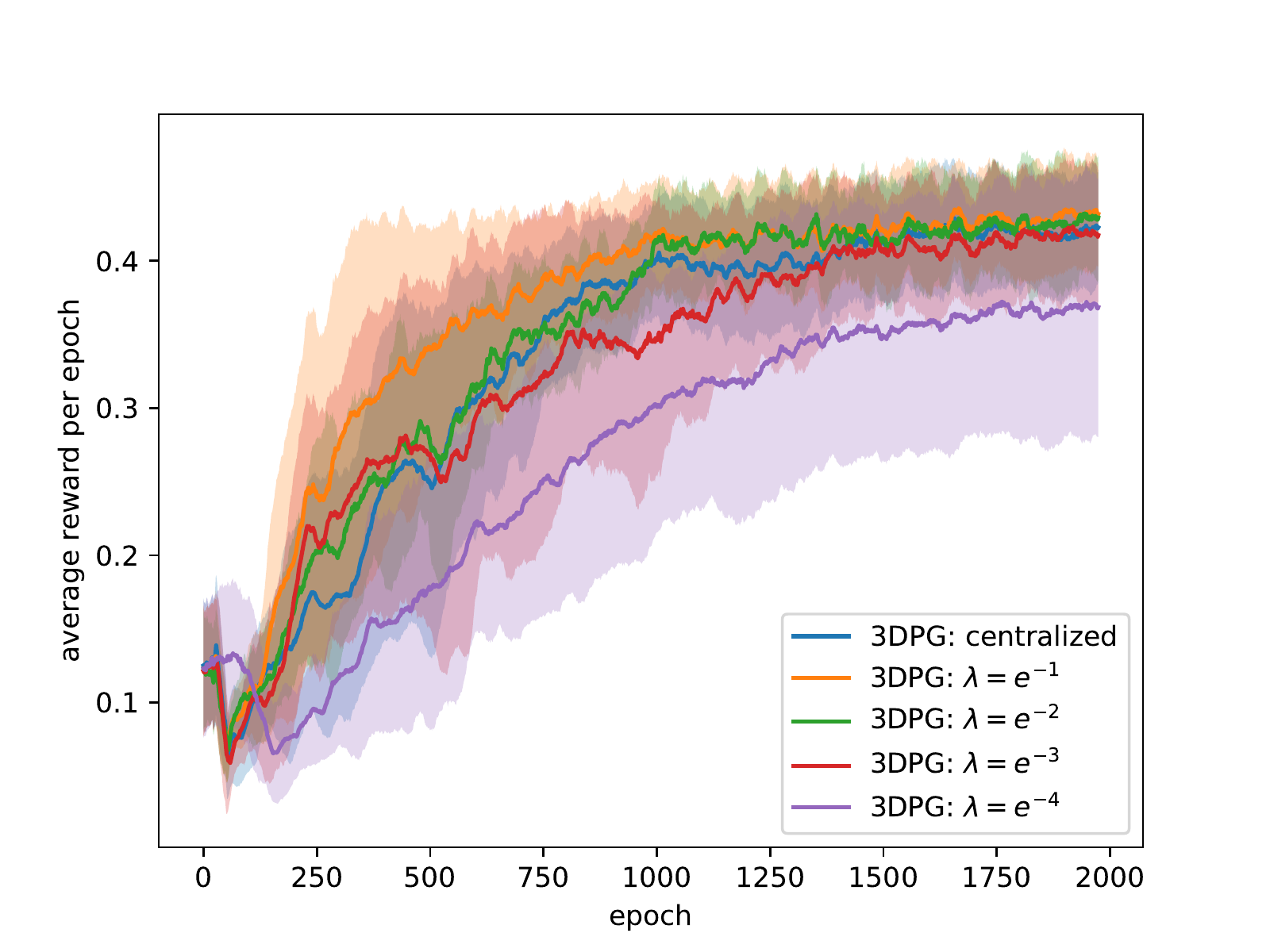}}%
\qquad
\subfigure[Snapshot of the experienced AoI at agent~$1$]{%
\label{fig:aoi}%
\includegraphics[width=.47 \textwidth]{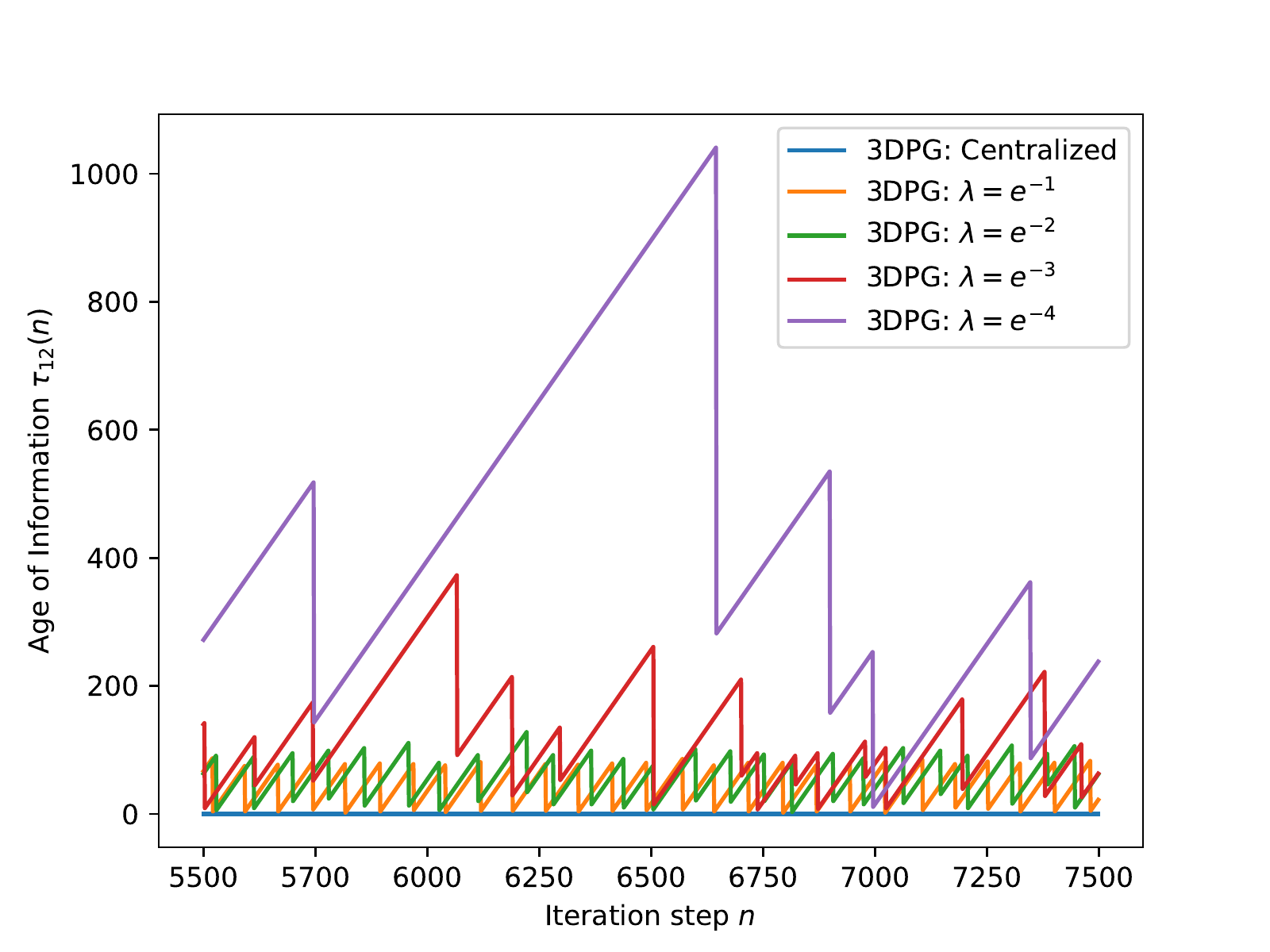}}%
\caption{Comparison of 3DPG with communication; $\lambda$ is the communication success probability.}
\end{figure}

\section{Conclusions and Future Work}
\label{sec:discussions}

In this paper, we presented and analyzed 3DPG, a multi-agent reinforcement learning algorithm for decentralized, online learning in networked systems. We showed that our analysis can be modified to understand the popular MADDPG algorithm \cite{lowe2017multi}. 
Our analysis and numerical examples show that 3DPG should be preferred when a multi-agent decision making problem requires coordinated decisions or a high degree of exploration.

In addition, we presented the first set of data availability assumptions (A1) for distributed online actor-critic learning in network systems. The assumptions describe how old locally available global data tuples are allowed to be so that 3DPG  converges. Our numerical experiments show that 3DPG is highly robust to the use of this old information, which makes it attractive for distributed online multi-agent learning.

For future work, we plan to analyze the trade off between using network resources for policy communication vs. using network resources for data communication. In addition, we are working on an analysis of 3DPG an its variants under a traditional two-timescale step-size schedule.

\bibliographystyle{IEEEtran}
\bibliography{references}

\appendices

\section{Missing proofs}
\label{app:proofs}

\begin{proof}[Proof of \Cref{lem: local_lip}]
    By (A5) the neural network activations are twice-continuous differentiability ($C^2$), hence $\pi^i(s; \phi^i)$ and $Q^i(s,a; \theta^i)$ are $C^2$ in their input coordinates. Additionally, it follows from  \cite[Lemma 9]{ramaswamy2021deep} that $\pi^i(s^i; \phi^i)$ and $Q^i(s,a; \theta^i)$ are $C^2$ in their parameter coordinates $\phi^i$ and $\theta^i$, respectively, for every fixed $s \in \cS$ and $a \in \cA$. Note that composition, product and sums of $C^2$ functions are $C^2$. Moreover $C^2$ functions have local Lipschitz gradients. This is because the gradient is $C^1$, and $C^1$ functions are locally Lipschitz \cite{conway2019course}.
    This immediately shows that $\nabla_{\theta^i} l^i(\theta^i_n, \phi_{n}, s_n, a_n, s_{n+1})$ and $g(\theta^i_n, \phi_{n}, s_n)$ have the required properties.
    
    For $\nabla_{\theta^i} \hat{l}^i(\theta^i_n, \phi_{n}, s_n, a_n)$, fix parameter vectors $\phi$ and $\theta^i$ as well as $s \in \cS$ and $a\in \cA$. Since, $\pi(s; \phi)$ and $Q^i(s,a; \theta^i)$ are $C^2$ in every coordinate, there is some $R>0$ and continuous functions $L_{Q^i}(y,\theta^i, \phi)$ and $L_{\pi}(y,\phi)$, such that $\forall \ \phi_1, \phi_2 \in \overline{B}_R(\phi)$, we have
    \begin{equation}
    \begin{split}
        &\Big\lvert \int Q^i(y, \pi(y; \phi_1) ; \theta^i) p(dy\mid s, a) \\ &\qquad - \int Q^i(y, \pi(y; \phi_2) ; \theta^i) p(dy\mid s, a) \Big\rvert \\
        &\le \int L_{Q^i}(y,\theta^i) \norm{\pi(y; \phi_1) - \pi(y; \phi_2)}_2 p(dy\mid s, a) \\
        &\le \norm{\phi_1 - \phi_2}_2 \int L_{Q^i}(y,\theta^i, \phi) L_{\pi}(y,\phi) p(dy\mid s, a) \\ &\le L(\phi) \norm{ \phi_1 - \phi_2}_2
    \end{split}
    \end{equation}
    for some $L(\phi)>0$. The last inequality follows from the stability of the critic iteration (A3)(a) and the compactness of the state space (A3)(b).
    Hence,  $\nabla_{\theta^i} \hat{l}^i(\theta^i, \phi, s, a)$ is locally Lipschitz as a product and sum of locally Lipschitz functions. It is left to show that $\nabla_{\theta^i} \hat{l}^i(\theta^i, \phi, s, a)$ is continuous in the $s$ and $a$ coordinate. This directly follows from the convergence in distribution by continuity of $p(dy\mid s, a)$ and $r^i(s,a)$, (A4) and (A6) respectively, and since $Q^i(s, \pi(s; \phi); \theta^i)$
    is a bounded continuous function using (A3).
\end{proof}

\begin{proof}[Proof of \Cref{lem:finite_CDFsum}]
	Fix $\varepsilon \in (0,1)$. By (A1)(a) there is a non-negative integer-valued random variable $\overline{\tau}$, such that
	\begin{equation}
		\Pr{\tau_{ij}(n) > \varepsilon n^{\frac{1}{q_1}}} \le \Pr{\overline{\tau}> \varepsilon n^{\frac{1}{q_1}}}
	\end{equation}
	for all $n \in \N_0$ and $\Ew{\overline{\tau}^{q_1}} < \infty$.
	Hence, 
	\begin{align}
		\sum_{n=0}^{\infty} \Pr{\tau_{ij}(n) > \varepsilon n^{\frac{1}{q_1}}} &\le \sum_{m=0}^{\infty} \sum_{n \in \cN(m)}\Pr{\overline{\tau} > \varepsilon n^{\frac{1}{q_1}}} 
		\\ &\le \sum_{m=0}^{\infty} \sum_{n \in \cN(m)}\Pr{\overline{\tau} > m} \\ &= \sum_{m=0}^{\infty} \lvert \cN(m) \rvert \Pr{\overline{\tau} > m},
	\end{align}
	where the sets $\cN(m)$ are defined as
	\begin{equation}
		\cN(m) \coloneqq \{n \in \N_0 : m \le \varepsilon n^{\frac{1}{q_1}} < m+1\} 
	\end{equation}
	for every $m \in \N_0$. The second inequality then follows from the monotonicity of the cumulative distribution function (CDF) by definition of the sets $\cN(m)$.
	Since $\lvert \cN(m) \rvert \le \frac{1}{\varepsilon^{q_1}}\left((m+1)^{q_1}- m^{q_1}\right)$, we have therefore shown that
	\begin{equation}
		\label{eq:sumCDF^ineq}
		\begin{split}
		    \sum_{n=0}^{\infty} \Pr{\tau_{ij}(n) > \varepsilon n^{\frac{1}{q_1}}} &\le \frac{1}{\varepsilon^{q_1}}\sum_{n=0}^{\infty}\lvert \cN(m) \rvert \Pr{\overline{\tau}   >  n} \\ & \le \frac{1}{\varepsilon^{q_1}} \Ew{\overline{\tau}^{q_1}} < \infty.
		\end{split}
	\end{equation}
	The last inequality follows since $\overline{\tau}$ is a non-negative integer-valued random variable, using the following proposition:
	\begin{proposition}
	\label{prop:moment_eq}
	Suppose $X$ is a non-negative integer-valued random variable, then for every $q>0$:
	\begin{equation}
		\Ew{X^q} = \sum_{m=0}^{\infty} ((m+1)^q-m^q)\Pr{X > m}.
	\end{equation}
    \end{proposition}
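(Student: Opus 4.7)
The plan is to prove the identity by writing $X^q$ as a telescoping sum of length $X$ and then swapping expectation with summation. The key algebraic observation is that for every non-negative integer $x$,
\begin{equation}
x^q = \sum_{m=0}^{x-1}\bigl((m+1)^q - m^q\bigr),
\end{equation}
which is a trivial telescoping identity (with the convention that an empty sum equals zero when $x=0$). Since $X$ is non-negative integer-valued, this yields the pointwise identity
\begin{equation}
X^q = \sum_{m=0}^{\infty}\bigl((m+1)^q - m^q\bigr)\mathbf{1}\{X > m\}.
\end{equation}

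Next, I would take expectations on both sides and justify the interchange of expectation and infinite sum. Every summand is non-negative because $q > 0$ implies $(m+1)^q - m^q \ge 0$ and indicators are non-negative, so Tonelli's theorem applies unconditionally. Pulling the expectation inside gives
\begin{equation}
\Ew{X^q} = \sum_{m=0}^{\infty}\bigl((m+1)^q - m^q\bigr)\,\Ew{\mathbf{1}\{X > m\}} = \sum_{m=0}^{\infty}\bigl((m+1)^q - m^q\bigr)\Pr{X > m},
\end{equation}
which is the claim. The identity holds as an equality in $[0,\infty]$, so no finiteness assumption on $\Ew{X^q}$ is needed; in particular, the inequality application in \eqref{eq:sumCDF^ineq} of \Cref{lem:finite_CDFsum} goes through regardless.

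There is essentially no obstacle here: the only subtlety is making sure the telescoping formula is correctly interpreted at $x=0$ (empty sum) and invoking Tonelli rather than Fubini so that no integrability hypothesis is required a priori. Both points are routine.
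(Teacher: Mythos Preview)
Your proof is correct and is the standard telescoping-plus-Tonelli argument. The paper itself does not supply a proof of this proposition (it is merely stated as a known fact within the proof of \Cref{lem:finite_CDFsum}), so your argument fills the gap exactly as intended.
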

    
\end{proof}

\begin{proof}{of Lemma \ref{lem:Martingale}: \quad}
    We have 
    \begin{equation}
    \begin{split}
         &\psi^i_n = \gamma \Big(  Q^i(s_{n+1}, \pi(s_{n+1}; \phi_n); \theta_n^i)- \\&\int Q^i(s, \pi(s; \phi_n); \theta_n^i) p(ds \mid s_n, a_n, \phi_n) \Big) \nabla_{\theta^i} Q^i(s_n,a_n; \theta_n^i).
    \end{split}
    \end{equation}
    Define the filtration $\cF_{n-1} \coloneqq \sigma(s_m,a_m,\theta_m, \phi_m \mid m \le n)$ for $n\ge 1$. It then follows that $\{\Psi_n\}$ is a zero-mean martingale. It follows from (A3) and the $C^2$ condition in (A5) that $\sup_{n\ge 0} \norm{\psi^i_n} \le K < \infty$ for a sample path dependent constant $K.$ It then follows from the martingale convergence theorem \cite{durrett2019probability} that $\Psi^i_n$ converges, since $\sum_{m=0}^n \alpha^2(m) \norm{\psi^i_m}^2 < \infty $ almost surely by (A2)(a).
\end{proof}

\begin{proof}{of Lemma \ref{lem: ode_solution}: \quad}
Consider the sequence $\theta^i_n$. The proof for the other parameter sequences are identical. Fix $T>0$. We need to show that
\begin{equation}
\begin{split}
\label{eq: lem3_obj}
    \sup\limits_{t\in[0,T]} \lVert \theta^i_n(t) - \theta^i_\infty(0) - \int_0^t \tilde{\nabla} l^i (\theta_n^i(x), \phi_\infty(x), \mu_\infty(x)) dx \lVert
\end{split}
\end{equation}
converges to zero. The norm in \eqref{eq: lem3_obj} is bounded by
\begin{equation}
\begin{split}
    \lVert \theta^i_n(0) - \theta^i_\infty(0) \lVert   &+ \lVert \int_0^t \tilde{\nabla} l^i (\theta_n^i(x), \phi_n(x), \mu_n(x)) \\ &-  \tilde{\nabla} l^i (\theta_\infty^i(x), \phi_\infty(x), \mu_\infty(x)) dx \lVert. 
\end{split}
\end{equation}
We can now expand the second term, by successively adding zeros for each policy of each agent $j \not= i$. We can then use \Cref{lem: local_lip} to bound the resulting expanison by a term
\begin{equation}
\label{eq: lem3_firstcomp}
    \cO\left( \int_0^t \norm{\theta_n^i(x) - \theta_\infty^i(x) } + \sum_{j \not= i} \norm{\phi_n^j(x) - \phi_\infty^j(x) } dx \right),
\end{equation}
Additionally, we are left with one term of the form
\begin{equation}
\label{eq: lem3_secondcomp}
    \begin{split}
        &\lVert \int_0^t \tilde{\nabla} l^i (\theta_\infty^i(x), \phi_\infty(x), \mu_n(x))\\ &\qquad -  \tilde{\nabla} l^i (\theta_\infty^i(x), \phi_\infty(x), \mu_\infty(x)) dx \lVert. 
    \end{split}
\end{equation}
Due to the compact convergence of every parameter sequences (Arzela-Ascoli theorem) every parameter sequence will converge uniformly over $[0,T]$. This shows that \eqref{eq: lem3_firstcomp} converges to zero. Finally, \eqref{eq: lem3_secondcomp} converges to zero as $\mu_n \rightarrow \mu_\infty$ in distribution and since $\theta_\infty^i(t)$, $\phi_\infty(t)$ are bounded almost surely by (A3)(a) and \Cref{lem: StochApproximation_lemma}.
\end{proof}

\section{Simulation Details}
\label{app:env&algo}

\SetKwComment{Comment}{/* }{ */}
\begin{algorithm}
    \caption{3DPG Algorithm at agent~$i$}
    \label{algo:3DPG}
    Randomly initialize critic and actor weights $\theta^i_0, \phi^i_0$ \;
    Randomly initialize actor weights $\phi^j_0$ for all $j\not = i$ \;
	Initialize replay memory $R^i_0$ and noise process $\cN^i$. \;
    \For{the entire duration}{
        Receive current state $s_n^i$ \;
        Execute action $a_n^i = \mu^i(s^i_n; \phi_n^i ) + \cN_n^i $\;
        Observe $r^i_{n+1}$ and $s^i_{n+1}$ \;
        Allocate local data $(s_n^i, a_n^i, s_{n+1}^i)$ and current local policy $\phi^n_{i}$ for transmission to other agents \;
		Run communication protocols \;
		Store completely received global tuples $t_m^i$ in $R^i_n$ \;
		Sample $M$ transitions from $R_n^i$ \;
		Apply iteration \eqref{eq: AC_iteration} using the sampled transitions \;
   }
\end{algorithm}

For our experiments, we consider a simplified version of the simple spread multi particle coordination problem in \cite{lowe2017multi}. 
Agents and landmarks are represented by points in $[-1,1]^2$. Moreover, agents can move around by choosing a displacement from the set $[-0.1,0.1]^2$. Agents can observe their relative distance to the landmarks and other agents. The actual simple spread environment considers that agents and landmarks take room in space, and the agents are penalized for collisions. 


For our experiments we use target networks for the local policy and critic as well as an Ornstein–Uhlenbeck processes for exploration, both are described in \cite{lillicrap2015continuous}. Both MADDPG and 3DPG use the following algorithm configurations, chosen based on a rough hyperparameter sweep for both algorithms.
\begin{itemize}
    \item Discount factor $\alpha = 0.9$; Replay memory size 20000; Minibatch size 128; Two layer GELU neural networks for each local policy with 64 and 8 neurons and tanh output layer; Two layer GELU neural networks for each local critic with 1024 and 64 neurons.
    \item $\alpha(n) = \frac{e^{-6}}{\frac{n}{1000} + 1}$, $\beta(n) = \frac{e^{-6}}{\frac{n}{1000} + 1} + \frac{e^{-6}}{(\frac{n}{1000} + 1)^2}$
\end{itemize}



\end{document}